\pdftrailerid{}
\documentclass[11pt]{article}
\usepackage[letterpaper,margin=1in]{geometry}
\usepackage[T1]{fontenc}
\usepackage{lmodern}
\usepackage{amsmath,amssymb,amsthm}
\usepackage{graphicx,booktabs,microtype}
\usepackage{float}
\usepackage[round]{natbib}
\setcitestyle{citesep={,}}
\usepackage[hidelinks]{hyperref}
\usepackage{url}

\newtheorem{theorem}{Theorem}
\newtheorem{lemma}[theorem]{Lemma}
\newtheorem{proposition}[theorem]{Proposition}
\newtheorem{corollary}[theorem]{Corollary}
\theoremstyle{remark}

\DeclareMathOperator{\conv}{conv}

\DeclareMathOperator*{\argmin}{arg\,min}
\newcommand{\R}{\mathbb R}
\newcommand{\ip}[2]{\langle #1,#2\rangle}
\newcommand{\norm}[1]{\lVert #1\rVert_2}
\hypersetup{pdftitle={Online Frank–Wolfe Cannot Beat T\textasciicircum{3/4}: Lower Bounds for Linear-Oracle Online Learning},pdfauthor={Mohit Sinha},pdfcreator={},pdfproducer={}}
\title{Online Frank--Wolfe Cannot Beat \(T^{3/4}\):\\
Lower Bounds for Linear-Oracle Online Learning}
\author{Mohit Sinha}
\date{}
\begin{document}
\maketitle
\begin{abstract}
Can a constant number of linear minimizations per round improve on the \(T^{3/4}\) regret rate of online Frank--Wolfe on general convex sets? Weibel et al.\ conjectured that fixed-coefficient methods cannot. We prove their conjecture and extend the lower bound to every deterministic learner in an oracle-only model. The learner receives an initial feasible point and a diameter bound, and must remain feasible on every domain consistent with its oracle replies. For \(T\) rounds, at most \(b\) calls between decisions, diameter bound \(D\), and gradient norm bound \(L\), we construct an instance in dimension \(d=2b(T-1)+1\) with regret at least \(2^{-1/4}LD\,b^{-1/4}T^{3/4}\). The adversary fixes the domain, initial point, deterministic tie rule and linear losses before play. The vertices form a path on which every point available before a decision has zero current loss, while the final vertex has negative loss on every round. For constant \(b\), the result matches the known upper rate for dimension-independent guarantees. For one-call fixed schedules with a nonzero coefficient on the newest gradient, a second construction gives regret at least \(3LD T^{3/4}/4\) with unique minimizers at every issued query. Exact-arithmetic certificates for the tuned schedule of Weibel et al.\ closely match their finite-horizon numerical worst cases, with unique oracle replies.
\end{abstract}

\raggedbottom
\section{Introduction}
Online Frank--Wolfe replaces projection onto a convex set by linear minimization over it. This can reduce the cost of a round substantially, as linear minimization over a nuclear-norm ball requires only a leading singular-vector pair \citep{Jaggi2013}. With one linear minimization oracle (LMO) call per round, the method of \citet[Algorithm~1 and Theorem~4.4]{HazanKale2012} achieves regret of order \(T^{3/4}\) over \(T\) rounds. Whether a constant number of calls permits a dimension-independent improvement on general convex sets is a central question in projection-free online learning \citep{HazanMinasyan2020,Mhammedi2025}. Following a numerical performance-estimation study, \citet[Section~4]{WeibelEtAl2026} conjectured a matching lower bound for methods with coefficients fixed in advance. We prove their conjecture.

We prove a matching lower bound for every deterministic learner in an oracle-only model. The learner receives an initial feasible point and a diameter upper bound, accesses the domain through an exact LMO, and must remain feasible for every domain consistent with that information. Its queries and computation may otherwise depend arbitrarily on the observed history. With at most \(b\) calls between consecutive decisions, some fixed domain, oracle and linear loss sequence force regret at least
\[
2^{-1/4}LD\,b^{-1/4}T^{3/4},
\]
where \(L\) bounds gradient norms and \(D\) bounds the diameter. The required dimension is \(2b(T-1)+1\). Thus constant-call deterministic methods have minimax exponent \(3/4\) when the guarantee must hold in every dimension.

The construction arranges the vertices along a path. At every round, all points the learner can play have loss zero, while a common unreached vertex has negative loss. Past gradients and returned vertices give every sufficiently late vertex the same query value, so a fixed least-index oracle keeps the learner behind the next loss. For fixed \(b\), the geometry of the path makes each loss difference of order \(LD T^{-1/4}\). A resisting rotation extends this argument from queries in the span of observed vectors to arbitrary deterministic queries.

The path construction uses a chosen oracle tie rule. For one-call fixed schedules with a nonzero coefficient on the newest gradient, we further construct instances with unique minimizers at every issued query, so every exact LMO gives the same hard execution. These instances have regret at least \(3LD T^{3/4}/4\) in dimension \(2T+1\). For the tuned online Frank--Wolfe schedule, exact-arithmetic certificates give regret at least \(1.18LD T^{3/4}\) at \(T=10,20,30,40,50,60\) (Table~\ref{tab:certified-examples}).

\paragraph{Related work.}
Domain geometry and loss structure permit faster rates. On \(d\)-dimensional polytopes, \citet{GarberHazan2016} obtain \(O(\sqrt{dT})\) regret with polytope-dependent constants. Strongly convex domains \citep{WanZhang2021} or strongly convex losses \citep{KretzuGarber2021} permit \(O(T^{2/3})\). On strongly convex domains, \citet[Theorem~8]{Mhammedi2022Curvature} obtains \(\widetilde O(\sqrt T)\) with at most two linear minimizations per round, where \(\widetilde O\) suppresses logarithmic factors and the constants depend on the geometry. Randomized guarantees can depend on dimension \citep{HazanMinasyan2020}. Our hard losses are linear, hence have constant gradients and smoothness zero. The construction uses domains whose dimension grows with the horizon.

Membership \citep{Mhammedi2022Membership} and separation \citep{Mhammedi2025} oracles provide different information about the domain. For adaptive regret, which controls regret on every subinterval, \citet{GarberKretzu2022Adaptive} obtain \(O(T^{3/4})\) with \(O(T)\) total linear-oracle calls. Our budget limits the calls between each pair of decisions, and our feasibility requirement is stated explicitly below.

Resisting-oracle arguments are classical \citep{NemirovskiYudin1983}. The fixed-schedule specialization is also related to oblivious first-order methods \citep{ArjevaniShamir2016}. Performance estimation \citep{DroriTeboulle2014} and interpolation \citep{TaylorHendrickxGlineur2017Interpolation} describe finite worst-case problems. Related oracle-complexity results concern smooth convex minimization \citep{Drori2017Exact} and the distance-to-solution criterion for smooth strongly convex minimization \citep{DroriTaylor2022Oracle}. We use the online performance-estimation problem (PEP) of \citet{WeibelEtAl2026} for sharper schedule-specific information, retaining the harmonic analysis in the appendix.

The direct online-to-batch use of the nonsmooth oracle lower bound of \citet[Theorem~2]{Lan2013} yields only the \(\sqrt T\) scale, so our construction uses changing losses to keep a single comparator ahead of the available points. Recent offline results concern strongly convex sets \citep{GrimmerLiu2026,HalbeyEtAl2026StronglyConvex} and curvature-dependent Frank--Wolfe rates \citep{HalbeyRouxPokutta2026Curvature}.

\section{Model}\label{sec-model}
Let \(T,b\ge1\) be integers, let \(d\) be the ambient dimension, and let \(L,D>0\). We use the Euclidean inner product \(\ip{\cdot}{\cdot}\) and norm \(\norm{\cdot}\). The unknown domain \(\mathcal C\subset\R^d\) is nonempty, compact and convex, with diameter at most \(D\). The learner receives \(T,b,d,L,D\) and a point \(x_1\in\mathcal C\), which is its first decision. On round \(t\), it plays \(x_t\), observes a gradient \(g_t\) with \(\norm{g_t}\le L\), and incurs loss \(\ip{g_t}{x_t}\). Regret is
\begin{equation}\label{eq-regret}
R_T=\sum_{t=1}^T\ip{g_t}{x_t}
-\min_{u\in\mathcal C}\sum_{t=1}^T\ip{g_t}{u}.
\end{equation}
An exact LMO is a fixed map \(\mathcal O:\R^d\to\mathcal C\) with
\[
\mathcal O(q)\in\argmin_{v\in\mathcal C}\ip{q}{v}.
\]
The learner has no further domain information or access. After observing \(g_t\), it may make at most \(b\) sequential oracle calls before playing \(x_{t+1}\), and later queries in that round may use earlier replies from the same round. There are no calls before \(x_1\), and calls after loss \(T\) have no effect on regret. The budget applies separately between each pair of decisions.

A deterministic learner in this model must produce feasible decisions for every admissible domain, oracle and loss sequence. In particular, \(D\) is an announced upper bound, even when a constructed domain attains it. The points already seen could themselves generate the entire domain, so feasibility forces the learner to stay in their convex hull. Write \(\conv\) for convex hull, and let \(\mathcal H_t\) be the convex hull of \(x_1\) and the replies received before round \(t\).

\begin{lemma}[Feasibility from the observed points]\label{lem-hull}
Every deterministic learner in this model plays \(x_t\in\mathcal H_t\).
\end{lemma}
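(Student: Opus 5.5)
The plan is an indistinguishability argument: replace the true domain by the smallest domain consistent with everything the learner has seen. Fix an execution of a deterministic learner on some admissible instance \((\mathcal C,\mathcal O,g_1,\dots,g_T)\) with initial point \(x_1\). Fix a round \(t\). Let \(v_1,\dots,v_m\) be the oracle replies received before round \(t\), with \(v_j=\mathcal O(q_j)\). Set \(\mathcal C'=\mathcal H_t=\conv\{x_1,v_1,\dots,v_m\}\). The goal is to build an admissible instance on \(\mathcal C'\) that produces exactly the same transcript up to the decision \(x_t\). Feasibility on that instance then forces \(x_t\in\mathcal C'\).

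\textbf{Step 1: \(\mathcal C'\) is an admissible domain.} It is the convex hull of finitely many points, so it is nonempty, compact and convex. It is contained in \(\mathcal C\), since all its generators lie in \(\mathcal C\): \(x_1\in\mathcal C\) is given and replies are in \(\mathcal C\). Hence \(\operatorname{diam}\mathcal C'\le\operatorname{diam}\mathcal C\le D\), and the announced bound \(D\) stays valid. The same \(x_1\) is a valid initial point.

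\textbf{Step 2: an exact LMO on \(\mathcal C'\) that agrees with \(\mathcal O\) on the past queries.} Define \(\mathcal O'(q_j)=v_j\) for each issued query \(q_j\). If the same query vector is issued more than once, the deterministic map \(\mathcal O\) returned the same reply each time, so this is well defined. For every other \(q\), let \(\mathcal O'(q)\) be any fixed minimizer of \(\ip{q}{\cdot}\) over the compact set \(\mathcal C'\). To check exactness at \(q_j\), note that \(v_j\in\mathcal C'\subseteq\mathcal C\) and \(v_j\) minimizes \(\ip{q_j}{\cdot}\) over \(\mathcal C\). It therefore minimizes over the subset \(\mathcal C'\) as well. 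So \(\mathcal O'\) is a fixed map into \(\mathcal C'\) returning minimizers, i.e.\ an admissible exact LMO. Keep the same gradients \(g_1,\dots,g_T\); their norm bounds are unaffected by the domain.

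\textbf{Step 3: identical transcripts by induction.} The learner is deterministic, and its inputs are \(T,b,d,L,D\), \(x_1\), the gradients, and the oracle replies. We induct over the sequence of events in order. At each event the learner's history on \((\mathcal C',\mathcal O')\) coincides with its history on \((\mathcal C,\mathcal O)\), so it issues the same query or plays the same point. Every query issued before round \(t\) is some \(q_j\), and \(\mathcal O'(q_j)=v_j\). Therefore the replies coincide and the induction continues up to the production of \(x_t\). The learner must be feasible on the admissible instance \((\mathcal C',\mathcal O',g)\), so \(x_t\in\mathcal C'=\mathcal H_t\).

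The only delicate point is Step 2: the substitute oracle must be a genuine single-valued fixed map that is an exact LMO for the smaller set. The key fact is that a minimizer over a set that lies in a subset remains a minimizer over that subset. I would also state explicitly that the transcript equivalence concerns only events before \(x_t\). Calls made after \(x_t\) are irrelevant, so the later behaviour of \(\mathcal O'\) does not matter.
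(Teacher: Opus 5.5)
Your proposal is correct and follows the paper's proof essentially step for step. You take \(\mathcal H_t\) as an admissible domain of diameter at most \(D\), observe that past replies remain minimizers over this subset and are consistent on repeated queries, extend them to an exact fixed oracle, and replay the deterministic history, so that feasibility forces \(x_t\in\mathcal H_t\).
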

\begin{proof}
The domain \(\mathcal H_t\) is compact, convex and has diameter at most \(D\). Every past reply minimizes its query over \(\mathcal H_t\), since all its points lie in the original domain. Repeated queries already have consistent replies. Extend those replies to an exact fixed oracle on \(\mathcal H_t\) by selecting a minimizer elsewhere. The same gradients and this oracle reproduce the history. Feasibility on that admissible domain therefore requires \(x_t\in\mathcal H_t\).
\end{proof}

\section{The path construction}\label{sec-chain}
The path must fit inside a bounded domain while keeping all sufficiently late vertices indistinguishable to the learner's queries. We arrange the increments so that each returned vertex has the same inner product with every later vertex. Each loss then separates a prefix of the path from its remaining vertices, rewarding only those beyond the learner's reach.

We first give one instance for every learner whose queries lie in the linear span of observed gradients and returned points and whose decisions lie in their observed convex hull. Coefficients in this intermediate class may be adaptive or randomized. Set \(x_1=0\), and let \(M=b(T-1)+1\). A Gram matrix records the pairwise inner products of a list of vectors. To specify the path increments, define the tridiagonal matrix \(\mathbf A_{\rm p}\in\R^{M\times M}\) to have diagonal entries \(2\), adjacent entries \(-1\), and all other entries zero. Put
\[
\mathbf K=\frac{D^2}{4}\left(\mathbf A_{\rm p}+\frac2M I_M\right),
\]
where \(I_M\) is the identity matrix. Choose linearly independent increments \(\Delta_1,\ldots,\Delta_M\in\R^M\) with Gram matrix \(\mathbf K\), and define \(M+1\) vertices by
\[
w_1=0,\qquad w_j=-\sum_{i<j}\Delta_i\quad(2\le j\le M+1).
\]
Their convex hull is the domain. The fixed oracle selects the least-index minimizing \(w_j\).

Three relations give the required diameter, equal query scores on later vertices, and bounded gradient norms. For \(1\le i<j\le M+1\),
\begin{equation}\label{eq-path-distance}
\norm{w_i-w_j}^2=\frac{D^2}{4}\left(2+\frac{2(j-i)}M\right),
\end{equation}
so the diameter is exactly \(D\). For \(2\le i<j\le M+1\),
\begin{equation}\label{eq-path-prefix}
\ip{w_i}{w_j}=\frac{D^2}{4}\left(1+\frac{2(i-1)}M\right),
\end{equation}
so an observed vertex gives the same query contribution to every later vertex. To separate a prefix from the remaining vertices, let \(\Delta_r^*\in\R^M\) be the dual increments, defined by \(\ip{\Delta_r^*}{\Delta_i}=1\) when \(r=i\) and zero otherwise. Their norms satisfy
\begin{equation}\label{eq-path-inverse}
\norm{\Delta_r^*}^2=(\mathbf K^{-1})_{rr}
\le\frac{\sqrt{2M}}{D^2}.
\end{equation}
The first two identities follow by summing contiguous entries of the tridiagonal Gram matrix. Appendix~\ref{app-chain} proves all three, including the inverse bound.

For \(1\le t\le T\), define \(k_t=1+b(t-1)\), which will bound the returned vertex indices before round \(t\). The dual increment at \(k_t\) distinguishes the first \(k_t\) vertices from the rest. Set the loss gap to \(c_{\rm p}=LD(2M)^{-1/4}\) and choose the fixed gradients
\[
g_t=c_{\rm p}\Delta_{k_t}^*.
\]
Equation~\eqref{eq-path-inverse} gives \(\norm{g_t}\le L\), while
\begin{equation}\label{eq-path-scores}
\ip{g_t}{w_j}=\begin{cases}0,&j\le k_t,\\-c_{\rm p},&j>k_t.\end{cases}
\end{equation}

\begin{proposition}[One instance for span queries]\label{prop-span}
On this domain, oracle and loss sequence, every learner with the query-span and observed-hull restrictions above has \(R_T=c_{\rm p}T\) on every run.
\end{proposition}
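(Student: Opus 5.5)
The plan is to prove by induction on \(t\) that every oracle reply received before the decision \(x_t\) is a vertex \(w_j\) with \(j\le k_t\). Given this, the observed-hull restriction places \(x_t\) in \(\conv\{w_1,\ldots,w_{k_t}\}\). By \eqref{eq-path-scores}, every such vertex has \(\ip{g_t}{w_j}=0\), so the learner's total loss is exactly zero on every run.

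For the comparator, the objective \(u\mapsto\sum_t\ip{g_t}{u}\) is linear, so its minimum over the domain is attained at a vertex. By \eqref{eq-path-scores}, \(\sum_t\ip{g_t}{w_j}=-c_{\rm p}\,\#\{t: k_t<j\}\ge -c_{\rm p}T\). Equality holds at \(j=M+1\), since \(k_T=M<M+1\). Hence \(R_T=0-(-c_{\rm p}T)=c_{\rm p}T\), regardless of the learner's adaptive or randomized coefficients.

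The heart of the argument is a tail-constancy claim. Let \(m\) be the largest index among the vertices returned so far and the thresholds \(k_s\) of the gradients \(g_s\) observed so far. Then every vector in the span of these observations has the same inner product with all \(w_j\), \(j>m\). I would verify this generator by generator, which suffices by linearity:
\begin{itemize}
\item \(w_1=0\) contributes nothing.
\item For a returned vertex \(w_i\) with \(2\le i\le m\), \eqref{eq-path-prefix} gives a value \(\ip{w_i}{w_j}\) independent of \(j>i\), hence constant on \(j>m\).
\item For an observed gradient \(g_s\) with \(k_s\le m\), \eqref{eq-path-scores} gives the constant \(-c_{\rm p}\) for \(j>k_s\).
\end{itemize}
Consequently the vertices \(w_{m+1},\ldots,w_{M+1}\) all tie on any admissible query. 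The least-index rule therefore returns some \(w_j\) with \(j\le m+1\). Each call thus raises the running maximum index by at most one.

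The induction then runs as follows. At \(t=1\), the only observed point is \(x_1=w_1\) and \(k_1=1\). Suppose all replies received before \(x_{t-1}\) have index at most \(k_{t-1}\). After \(g_{t-1}\), whose threshold is \(k_{t-1}\), is revealed, the running maximum is still at most \(k_{t-1}\). The at most \(b\) sequential calls made before \(x_t\) each raise it by at most one, so all replies before \(x_t\) have index at most \(k_{t-1}+b=k_t\). Later queries within the same round may use earlier same-round replies; these are covered because the tail-constancy claim is reapplied after each call with the updated maximum.

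I expect the main care to be needed in the tail-constancy step. One must check that the prefix identity \eqref{eq-path-prefix} is used only for pairs \(i<j\) and that \(w_1\) is handled separately. One must also check that ties are broken by least index, so that the oracle cannot jump beyond \(m+1\).
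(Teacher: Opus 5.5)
Your proof is correct and follows essentially the same route as the paper: span queries take a constant value on all vertices beyond $\max\{K,k_t\}$ by \eqref{eq-path-scores} and \eqref{eq-path-prefix}, so the least-index rule advances the frontier by at most one per call and keeps every reply before $x_t$ within index $k_t$, while $w_{M+1}$ serves as the comparator with loss $-c_{\rm p}$ on every round. Your separate treatment of $w_1$ and your explicit minimization of the comparator's total loss over vertices spell out steps that the paper leaves implicit.
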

\begin{proof}
Let \(K\) be the largest vertex index returned so far, including the initial index \(1\). After \(g_t\) is revealed, every available gradient assigns the same loss to all vertices with index greater than \(k_t\), by \eqref{eq-path-scores}. The inner product with any returned point is constant over later vertices, by \eqref{eq-path-prefix}. Thus a query in their span has one common score on all vertices with index greater than \(\max\{K,k_t\}\). The least-index rule implies
\[
K_{\rm new}\le\max\{K,k_t\}+1.
\]
Initially \(K=1=k_1\), so for \(t<T\), at most \(b\) calls in round \(t\) leave \(K\le k_t+b=k_{t+1}\). Before each decision, all available vertices have index at most \(k_t\), so \(\ip{g_t}{x_t}=0\). The last vertex \(w_{M+1}\) attains the minimum \(-c_{\rm p}\) on every round. Regret equals \(c_{\rm p}T\). This induction holds for every realized choice of coefficients.
\end{proof}

The same instance defeats every randomized learner satisfying the span-query and observed-hull restrictions, for every realization of its randomness. Thus randomizing the coefficients alone cannot improve the rate. Randomized queries outside the observed span remain beyond this argument. The dimension-dependent randomized guarantees of \citet{HazanMinasyan2020} motivate this possibility, while our results leave the dimension-independent question unresolved.

\subsection{Arbitrary deterministic queries}
An arbitrary query may contain directions outside the span of the observed vectors. We place the unseen part of the path orthogonally to each such new direction, preserving everything already revealed. The query then has the same scores on the domain as its component in the observed span. One extra dimension per call makes these choices compatible, and determinism lets the completed instance be fixed before play.
\begin{theorem}[Deterministic oracle lower bound]\label{thm-main}
Fix \(T,b\ge1\), \(L,D>0\), and \(d=2b(T-1)+1\). For every deterministic learner in the stated model in \(\R^d\), there exist a domain of diameter exactly \(D\), an initial point, a fixed deterministic vertex-valued exact LMO and a fixed sequence of gradients of norm at most \(L\), all chosen before play, such that, with \(M=b(T-1)+1\),
\begin{equation}\label{eq-main-lower}
R_T\ge LD\,T(2M)^{-1/4}
\ge 2^{-1/4}LD\,b^{-1/4}T^{3/4}.
\end{equation}
The oracle may be chosen to select the least-index minimizing vertex in one fixed ordering.
\end{theorem}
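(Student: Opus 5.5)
The plan is to simulate the deterministic learner against a resisting adversary that builds the path of Section~\ref{sec-chain} incrementally inside \(\R^d\), and then to check that the completed instance reproduces the simulated run and hence the span-query argument of Proposition~\ref{prop-span}. All Gram-matrix data are fixed in advance: the increments \(\Delta_1,\ldots,\Delta_M\) have Gram matrix \(\mathbf K\), and the gradients are \(g_t=c_{\rm p}\Delta^*_{k_t}\), where \(\Delta^*_r\) are the dual increments taken inside \(V=\operatorname{span}\{\Delta_i\}\). The only freedom is which \(M\)-dimensional subspace \(V\subset\R^d\) the increments span and how they sit in it. Because the learner is deterministic, its entire run is determined by the replies and gradients it sees. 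So we run it once against an adaptively embedded instance, fix the embedding at the end, and observe that rerunning against the fixed instance gives the same transcript.

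The adversary maintains an orthonormal frame. At any moment, the observed vectors are \(x_1=0\), the revealed vertices \(w_1,\ldots,w_K\), and the gradients \(g_1,\ldots,g_t\). These depend only on \(\Delta_1,\ldots,\Delta_{\max\{K,k_t\}}\); this uses the fact that \(\Delta^*_{k_t}\) is fixed only up to components along later increments, which I handle below. Suppose the learner issues a query \(q\). Split \(q=q_S+q_\perp\), where \(S\) is the span of the increments committed so far. The adversary commits the not-yet-placed increments so that they are orthogonal to \(q_\perp\), while keeping the Gram matrix \(\mathbf K\). Then \(\ip{q}{w_j}=\ip{q_S}{w_j}\) for every vertex. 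Since \(q_S\) lies in the span of the committed vectors, the score-equality argument of Proposition~\ref{prop-span} gives a common score on all vertices of index greater than \(\max\{K,k_t\}\). The least-index reply therefore satisfies \(K_{\rm new}\le\max\{K,k_t\}+1\), exactly as before. After the replies, decisions stay in \(\mathcal H_t\) by Lemma~\ref{lem-hull}. This gives \(\ip{g_t}{x_t}=0\) on every round, while \(w_{M+1}\) scores \(-c_{\rm p}\) on every round, so \(R_T=c_{\rm p}T=LD\,T(2M)^{-1/4}\). The second inequality in \eqref{eq-main-lower} follows from \(M\le bT\).

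The main obstacle is the dimension count: making \(d=2b(T-1)+1=2M-1\) suffice while every new increment is kept orthogonal to every previous \(q_\perp\). I would argue inductively that when increment \(\Delta_{j}\) is first committed, it need only have a prescribed Gram relation with \(\Delta_1,\ldots,\Delta_{j-1}\). It must also be orthogonal to the at most \(j-1\) earlier off-span query directions, namely one per call, and there are at most \(b(T-1)=M-1\) calls in total. The constraint on \(\Delta_j\) is then to lie in a fixed affine translate of the orthogonal complement of a space of dimension at most \((j-1)+(\text{calls so far})\), with a prescribed norm for its fresh component. This is feasible as long as a fresh unit direction remains, that is, whenever \((j-1)+(\text{calls})+1\le d\). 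With \(j\le M\) and at most \(M-1\) calls, this holds for \(d=2M-1\). Symmetrically, a new increment must be committed at most once per call, which gives the count of one extra dimension per call.

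A subtlety to check is that the gradients \(g_t=c_{\rm p}\Delta^*_{k_t}\) involve all increments, not only the committed ones. I would fix this by taking \(g_t\) to be the unique vector in \(\operatorname{span}\{\Delta_1,\ldots,\Delta_{k_t+1}\}\) whose inner products with the increments are \(\delta_{r,k_t}\) for \(r\le k_t+1\). In addition, every later increment would be placed so that its component off this span is orthogonal to \(g_t\), so that \eqref{eq-path-scores} still holds. The requirement is that the norm bound \(\norm{g_t}\le L\) survives. I expect this to follow because this \(g_t\) is a projection-type variant of the full dual, and its squared norm is an inverse entry of a leading principal submatrix of \(\mathbf K\). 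I would bound that entry by the same tridiagonal estimate as in \eqref{eq-path-inverse}. If that fails, the fallback is to precommit each gradient's support at the start of its round, which costs one extra dimension per round; that also fits within the count for \(b\ge1\) after merging it with the call directions.
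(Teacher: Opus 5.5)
Your outer strategy matches the paper's: a resisting embedding, then freezing it and replaying the deterministic learner. But committing increments one at a time breaks two steps.

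\emph{The frontier induction fails.} The tie argument of Proposition~\ref{prop-span} needs the query to lie in the span of the observed gradients and returned points, not in your committed span \(S\). By \eqref{eq-path-prefix}, a vector of \(\operatorname{span}\{\Delta_1,\ldots,\Delta_J\}=\operatorname{span}\{w_2,\ldots,w_{J+1}\}\) ties only the vertices of index greater than \(J+1\), and your decomposition neutralizes only \(q_\perp\). In your scheme \(\Delta_1,\Delta_2\) are committed as soon as the truncated \(g_1\) is revealed, although only \(g_1\) and \(w_1=0\) have been observed. A query whose \(S\)-component is \(\Delta_2\) scores \(0\), \(D^2/4\), \(-\frac{D^2}{4}(1+\frac2M)\) and \(-\frac{D^2}{4}\cdot\frac2M\) on \(w_1\), \(w_2\), \(w_3\) and every later vertex, so the reply is \(w_3\). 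Since \(\Delta_2\) already occupies a fixed vector of \(\R^d\), some deterministic learner issues this query. For \(b=1\) this gives \(K=3>k_2\), and the learner can play \(x_2=w_3\), whose round-\(2\) loss is \(-c_{\rm p}\).

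\emph{The truncated gradient violates \eqref{eq-path-scores}.} Put \(\ell=k_t+1\) and let \(G\) be the inverse of the leading \(\ell\times\ell\) block of \(\mathbf A_{\rm p}+\frac2MI_M\). The Gram matrix \(\mathbf K\) fixes the projection of \(\Delta_{k_t+2}\) onto \(\operatorname{span}\{\Delta_1,\ldots,\Delta_\ell\}\), which gives \(\ip{g_t}{\Delta_{k_t+2}}=-c_{\rm p}G_{k_t,\ell}\). Your orthogonality condition on off-span components holds automatically, since \(g_t\) lies in that span, so it cannot repair this. The entry \(G_{k_t,\ell}\) is positive, equals \(((2+\frac2M)^2-1)^{-1}\) at \(k_t=1\), and is close to \(1\) for large \(k_t\) and \(M\). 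Hence, whenever \(k_t+2\le M\), the vertex \(w_{M+1}\) scores only \(-c_{\rm p}(1-G_{k_t,\ell})\) on round \(t\), and \(R_T=c_{\rm p}T\) is lost. The norm bound you worried about does hold, by a Schur-complement comparison with \eqref{eq-path-inverse}, but it is not the issue. Your fallback is too vague to check.

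The missing idea is to pin down only the exposed span, never individual increments. The paper maintains a linear isometry \(\mathcal U:\R^M\to\R^d\) of the whole abstract path space, whose gradients are the true duals \(\bar g_t=c_{\rm p}\Delta^*_{k_t}\). It constrains \(\mathcal U\) only on the span \(\mathcal V\) of revealed gradients and replies, and stores normal directions \(\mathcal N\perp\mathcal U(\R^M)\). At a query \(q=q_{\mathcal S}+q_{\mathcal N}+r\), with \(\mathcal S=\mathcal U\mathcal V\) and \(r\perp(\mathcal S+\mathcal N)\), it re-places only \(\mathcal U(\mathcal V^\perp)\) inside \((\mathcal S+\mathcal N+\operatorname{span}\{r\})^\perp\) and adds \(r\) to \(\mathcal N\). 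This resolves both problems:
\begin{itemize}
\item Linearity on all of \(\R^M\) preserves every \(\ip{\mathcal U\bar g_t}{\mathcal Uw_j}\), so no truncation is needed.
\item The pullback \(\mathcal U^{-1}q_{\mathcal S}\) lies in \(\mathcal V\), so the frontier bound \(K_{\rm new}\le\max\{K,k_t\}+1\) applies verbatim.
\item The dimension condition \(d-h-n-1\ge M-h\), with \(h=\dim\mathcal V\) and \(n=\dim\mathcal N\le b(T-1)-1\) before each call, does not depend on \(h\), so revealed gradients cost no extra dimensions.
\end{itemize}
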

\begin{proof}[Proof of Theorem~\ref{thm-main}]
There are at most \(n_{\rm call}=b(T-1)\) calls. Embed the abstract path space \(\R^M\) isometrically in \(\R^{M+n_{\rm call}}\), exposing gradients and replies as the interaction proceeds. Maintain the images of every exposed vector, and a subspace \(\mathcal N\) orthogonal to the entire embedded path space. At a new query \(q\), let \(\mathcal S\) be the span of the exposed vectors and write
\[
q=q_{\mathcal S}+q_{\mathcal N}+r,
\qquad r\perp(\mathcal S+\mathcal N).
\]
If \(r\ne0\), move the unexposed part of the embedding to be orthogonal to \(r\), fixing \(\mathcal S\) pointwise, and add \(r\) to \(\mathcal N\). Before a call, fewer than \(n_{\rm call}\) independent normal directions have been stored, so there is room to retain the full \(M\)-dimensional path space. Every vertex then has the same query score as under \(q_{\mathcal S}\), whose pullback is a span query. Proposition~\ref{prop-span}'s index induction applies, and Lemma~\ref{lem-hull} puts every decision in the observed hull.

After this deterministic interaction, freeze the final embedding, all gradients and the least-index oracle. During the construction, each rotation preserves every previously exposed vector and keeps all stored normals orthogonal to the path space, so the frozen instance reproduces every query reply. Deterministic replay therefore reproduces the whole interaction, with regret \(c_{\rm p}T\). Appendix~\ref{app-rotation} supplies the dimension and replay details. For \(T=1\), the path has one increment and there are no calls, and the same loss calculation applies. Finally \(M\le bT\) proves the second inequality.
\end{proof}

The selected oracle can break ties against the learner, but remains unchanged throughout play. The domain and loss sequence in Theorem~\ref{thm-main} may depend on the deterministic learner. The theorem covers adaptive queries and decision rules whenever they satisfy the information and feasibility requirements above. It makes no claim for arbitrary-query randomized learners.

For \(b=1\) and \(T\ge3\), the upper bound \(4LD T^{3/4}/3^{3/4}\) of \citet[Theorem~3.1]{WeibelEtAl2026} gives a matching exponent with coefficient approximately \(1.755\). The upper-to-lower constant ratio is approximately \(2.09\). For any fixed \(b\), the one-call method is still available, so the exponent remains optimal. At \(b=T/4\), for horizons divisible by four, the second bound in \eqref{eq-main-lower} is \(2^{1/4}LD\sqrt T\), consistent with the square-root rate. The optimal dependence on a growing call budget remains open.

\section{Unique minimizers for fixed schedules}\label{sec-strict}
The chain gives all sufficiently late vertices the same query value. For fixed schedules, we remove the dependence on the oracle's tie rule by combining its Gram matrix with one whose prescribed replies strictly beat every competitor. Oracle comparisons are linear in Gram entries, so a small positive weight on the second matrix makes every issued minimizer unique while retaining most of the chain's regret.

\begin{samepage}
We now specialize to one call per round. Fix arrays \(\eta,\beta,\gamma\) before the instance is selected, and for \(1\le t<T\) set
\begin{equation}\label{eq-model}
\begin{aligned}
q_t&=\sum_{s\le t}\eta_{ts}g_s+\sum_{s<t}\beta_{ts}(v_s-x_1),\\
v_t&=\mathcal O(q_t),\qquad
x_t=x_1+\sum_{s<t}\gamma_{ts}(v_s-x_1).
\end{aligned}
\end{equation}
\end{samepage}
The decision formula also applies at \(t=T\). The coefficients may depend on \(T,L,D,d\). We require \(\gamma_{ts}\ge0\) and \(\sum_{s<t}\gamma_{ts}\le1\). This is the fixed-coefficient class of \citet{WeibelEtAl2026}, which includes the online conditional-gradient algorithm with predetermined step sizes in \citet[Section~7.5, Algorithm~24]{Hazan2016OCO}. The multiple-call model of \citet{WeibelEtAl2026} also permits earlier replies from the same round in later queries, and is a specialization of our timing convention.

The strict construction needs \(\eta_{tt}\ne0\). This lets the current gradient supply a new direction along which the next reply improves its query value. Moving the earlier part of that reply slightly toward the origin also preserves strictness at the previous queries.

\begin{corollary}\label{cor-strict}
Fix \(T\ge1\), \(L,D>0\), and arrays satisfying \eqref{eq-model} in dimension \(d=2T+1\), with \(\eta_{tt}\ne0\) for \(1\le t<T\). There is a domain of diameter exactly \(D\), an initial point and a loss sequence fixed before play such that every exact LMO yields
\[
R_T\ge\frac34LD\,T^{3/4}.
\]
At every issued query, the minimizing point is unique.
\end{corollary}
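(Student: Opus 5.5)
The approach is to specialize the path construction of Section~\ref{sec-chain} to $b=1$ and then perturb its Gram data so that every issued query has a unique minimizer. Because the schedule $(\eta,\beta,\gamma)$ is fixed, the whole execution is determined by a finite Gram matrix. The candidate execution is the one in which the reply at round $t$ is the vertex $w_{t+1}$. The vectors involved are the vertices $w_1=x_1=0,\dots,w_{T+1}$ and the gradients $g_1,\dots,g_T$, with Gram matrix $G_0$. Under $G_0$ the prescribed replies are minimizers, but ties occur. I will work in dimension $2T+1$, so that the vertices and gradients can be realized as general vectors with an arbitrary positive semidefinite Gram matrix of size at most $2T+1$.

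The plan is to set $G=(1-\epsilon)G_0+\epsilon G_1$, where $G_1$ is a second Gram matrix under which each prescribed reply $v_t=w_{t+1}$ strictly minimizes $\ip{q_t}{\cdot}$ over all other vertices. Each oracle comparison $\ip{q_t}{w_{t+1}}<\ip{q_t}{w_j}$ is linear in the Gram entries once the coefficients are fixed, because $q_t$ is a fixed combination of the $g_s$ and the $w_{s+1}$. Under $G_0$ these differences are $\le 0$, and under $G_1$ they are $<0$. The convex combination is therefore strict for every $\epsilon\in(0,1]$. Since the domain is a polytope with vertices $w_j$ and the vertices stay affinely independent, a strict vertex minimizer is the unique minimizer over the whole hull. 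It follows that every exact LMO returns the prescribed replies.

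The main step is constructing $G_1$, and this is where $\eta_{tt}\neq0$ is needed. I will build $G_1$ inductively in $t$. Choose $g_t$ to have a fresh orthogonal direction $e_t$, scaled by $\mathrm{sign}(\eta_{tt})$. Then set $w_{t+1}=\lambda_t\,(\text{earlier part})-\mu e_t$ with $\mu>0$. This gives $\ip{q_t}{w_{t+1}}$ a strict decrease of $\mu|\eta_{tt}|\norm{e_t}^2$ relative to all vertices with no $e_t$ component. Later vertices must also not beat $w_{t+1}$ at $q_t$. I will handle this by making later vertices carry the $e_t$ component in the same amount, so they share the decrease. The tie with later vertices is then broken by shrinking the earlier part of $w_{t+1}$ slightly toward the origin, using $\lambda_t<1$ as the paper hints. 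I would check that this shrinkage preserves strictness at the earlier queries $q_s$ for $s<t$, since those inequalities are already strict and open. This step is the main obstacle: I must ensure a single coordinate pattern simultaneously makes $w_{t+1}$ beat both earlier and later vertices. If needed, I will give later vertices a strictly smaller $e_t$ coefficient instead.

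Finally, I will bound the regret and the normalization. Decisions lie in $\mathcal H_t$, which is the convex hull of $w_1,\dots,w_t$. Under $G$ the loss of any such point is at most $\epsilon C$, while $w_{T+1}$ achieves $-(1-\epsilon)c_{\rm p}+O(\epsilon)$ on each round. I will rescale so that the diameter is exactly $D$ and the gradient norms are at most $L$; this costs only a factor $1+O(\epsilon)$. With $M=T$, the chain gives $c_{\rm p}T=LD\,T(2T)^{-1/4}=2^{-1/4}LD\,T^{3/4}\approx0.84\,LD\,T^{3/4}$. Choosing $\epsilon$ small then leaves regret above $\tfrac34 LD\,T^{3/4}$. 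The case $T=1$ is immediate.
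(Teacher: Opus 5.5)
Your overall strategy of mixing the $b=1$ chain Gram matrix with a strict witness is the paper's, but your specification of $G_0$ rests on a false claim. You prescribe $v_t=w_{t+1}$ and assert that these are (tied) minimizers under the chain Gram matrix. For a general fixed schedule they need not be minimizers at all.

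The simplest failure is a negative current coefficient, which the corollary explicitly allows. If $T\ge2$ and $\eta_{11}<0$, then $q_1=\eta_{11}g_1$ scores $|\eta_{11}|c_{\rm p}>0$ at $w_2$ by \eqref{eq-path-scores}, but $0$ at the origin $w_1$. The $\beta$ terms cause the same failure even when $\eta_{tt}>0$. For the tuned schedule of Section~\ref{sec-strict}, one has $v_1=w_2$ and $x_2=\sigma w_2$. The round-two query $\theta(g_1+g_2)+\sigma w_2$ then scores $-2\theta c_{\rm p}+\sigma\tfrac{D^2}{4}(1+2/T)$ at $w_3$, which is positive for every $T\ge16$, while it scores $0$ at the origin.

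A comparison gap that has the wrong sign under $G_0$ keeps the wrong sign in $(1-\epsilon)G_0+\epsilon G_1$ for small $\epsilon$. So the LMO does not return your prescribed reply, the transcript changes, and your regret computation no longer describes the play. Taking $\epsilon$ large instead destroys the chain's regret.

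The repair, which is what the paper does, is to run the schedule on the path with its least-index selector. Use the replies actually returned, which may repeat or equal the origin, as separate formal symbols, together with the comparator $u=w_{T+1}$. The induction of Proposition~\ref{prop-span} still confines $x_t$ to vertices of index at most $t$, so the chain objective is $c_{\rm p}T$. Take the domain to be the hull of the origin, these replies and $u$ only, discarding unreturned path vertices. Every comparison is then weakly correct under $G_0$, since chain replies minimize over all path vertices.

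Your $G_1$ must be built on this same formal list, with each $g_t$ a fresh unit vector. Each new reply should be $\upsilon_t\in(0,1)$ times the best earlier point, plus a fresh direction and a $-\operatorname{sign}(\eta_{tt})g_t$ component, with $(1-\upsilon_t)|\mathfrak m_t|<|\eta_{tt}|/2$. The comparator $u$ should be fresh and orthogonal, so it has score zero. Your fallback of giving later points strictly smaller $e_t$ coefficients is exactly what this contraction achieves.

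Separately, your final rescaling does not work as stated. Scaling points and gradients by different factors changes the balance between the $\eta$ and $\beta$ terms, and hence the minimizers. Uniform scaling up to diameter $D$ can push gradient norms above $L$. The paper instead adds one orthogonal padding point, which scores zero at every issued query while the reply scores strictly negative. It handles general $L,D$ by running the normalized construction on $((L/D)\eta,\beta,\gamma)$.
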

\begin{proof}
For a symmetric matrix \(W\), write \(W\succeq0\) for positive semidefiniteness and \(W\succ0\) for positive definiteness. Normalize \(L=D=1\) for the moment. Run the schedule on the path instance and let \(\mathbf G_{\rm ch}\) be the Gram matrix of its \(T\) gradients, \(T-1\) replies and comparator. Each squared norm, squared distance and oracle comparison is linear in these Gram entries, as is the regret against the comparator. Appendix~\ref{app-strict} constructs a second feasible Gram matrix \(\mathbf G_{\rm str}\) with zero objective and strict oracle comparisons against every other point.

The mixture \(\mathbf G_{\rm mix}=\tfrac9{10}\mathbf G_{\rm ch}+\tfrac1{10}\mathbf G_{\rm str}\) is feasible, preserves strict comparisons, and has objective \(\tfrac9{10}2^{-1/4}T^{3/4}>\tfrac34T^{3/4}\). Factoring this \(2T\times2T\) matrix realizes the vectors. Each prescribed reply is the unique minimizer over the convex hull of zero, the replies and the comparator, so every exact selector reproduces the transcript. Appendix~\ref{app-strict} verifies diameter padding in one further coordinate and restores \(L,D\). For \(T=1\), a segment with one loss gives regret \(LD\).
\end{proof}

Uniqueness concerns the directions issued in this execution. The corollary allows either sign of \(\eta_{tt}\), but a zero query on a nonsingleton domain cannot have a unique minimizer. Table~\ref{tab-scope} separates two kinds of guarantee. Theorem~\ref{thm-main} allows arbitrary adaptive deterministic queries with a chosen tie rule, while Corollary~\ref{cor-strict} removes the tie-rule dependence for its fixed schedules.
\begin{table}[t]
\centering\small\setlength{\tabcolsep}{4pt}
\begin{tabular}{@{}lcc@{}}
\toprule
& Theorem~\ref{thm-main} & Corollary~\ref{cor-strict}\\
\midrule
Learner & Deterministic & Fixed coefficients\\
Queries & Arbitrary & \(\eta_{tt}\ne0\)\\
Calls per round & \(b\) & \(1\)\\
Oracle & Selected fixed rule & Every exact rule\\
Dimension & \(2b(T-1)+1\) & \(2T+1\)\\
Coefficient, \(b=1\) & \(2^{-1/4}\) & \(3/4\)\\
\bottomrule
\end{tabular}
\caption{Lower bounds in the stated oracle-only model. The last row gives the coefficient of \(LD T^{3/4}\) for one call per round.}\label{tab-scope}
\end{table}

\paragraph{The tuned online Frank--Wolfe schedule.}
For \(T\ge3\), define
\[
\theta=\frac{3^{3/4}D}{2LT^{3/4}},\qquad
\sigma=\min\{1,\sqrt{3/T}\}.
\]
The method of \citet[Algorithm~1]{WeibelEtAl2026} uses
\[
q_t=\theta\sum_{s\le t}g_s+(x_t-x_1),\quad
x_{t+1}=(1-\sigma)x_t+\sigma v_t.
\]
It has \(\eta_{ts}=\theta\) for \(s\le t\) and
\(\beta_{ts}=\gamma_{ts}=\sigma(1-\sigma)^{t-1-s}\) for \(s<t\), with nonnegative row sums at most one. In particular \(\eta_{tt}=\theta>0\), so Corollary~\ref{cor-strict} applies. Its lower constant \(3/4\) and the proved upper constant \(4/3^{3/4}\) differ by a factor of approximately \(2.34\).

\section{Schedule-specific bounds and examples}\label{sec-bounds}
The common path instance gives the same lower bound for every schedule, but the decision weights can reveal larger regret. If a decision leans on a few outputs, a simplex construction makes its loss large. Spreading the weight reduces this concentration, but leaves more weight on old outputs that changing losses can penalize. Performance estimation quantifies these two effects.

For a one-call fixed schedule in \eqref{eq-model}, performance estimation expresses a worst-case transcript through its Gram matrix. With \(L=D=1\) and \(x_1=0\), let \(u\) be a comparator and let the point list contain \(0\), all oracle replies and \(u\). The semidefinite program (SDP) maximizes \(\sum_t\ip{g_t}{x_t-u}\) subject to positive semidefiniteness, gradient norm bounds, all pairwise diameter bounds and every oracle comparison. We use the full fixed-schedule PEP of \citet[Section~2.1]{WeibelEtAl2026}. Their design relaxation in Section~2.2 omits some comparisons. Appendix~\ref{app-canonicalization} gives the full formulation, and Appendix~\ref{app-fixed-oracle} realizes its value arbitrarily closely with a fixed selector.

The PEP also yields bounds tailored to the decision weights. A formal query is its coefficient vector over gradient and output symbols before choosing an instance. For this lower-bound argument, choose an oracle invariant under positive rescaling. Merge queries that are positive multiples of an earlier formal query, retain at most one zero query, and substitute the corresponding output aliases. Write \(m\) for the number of retained calls, \(\tau_r\) for the round of call \(r\), and \(\Gamma_{tr}\) for its weight in decision \(t\). For \(1\le k\le t\le T\), define
\[
\begin{aligned}
a_t&=\left(\sum_{r=1}^m\Gamma_{tr}^2\right)^{1/2},\qquad A=\sum_{t=1}^T a_t,\\
h_{t,k}&=1-\sum_{r:k\le\tau_r<t}\Gamma_{tr},\qquad
s_t=\sum_{k=1}^t h_{t,k}^2.
\end{aligned}
\]
Here \(h_{t,k}\) is the stale mass, namely the weight on the initial point and outputs from rounds before \(k\). Put \(F(\Gamma)=T^{-1/2}\sum_{t=1}^T\sqrt{s_t}\). If \(\mathsf V\) is the SDP optimum, the appendix proves
\begin{equation}\label{eq-schedule-bounds}
\mathsf V\ge\max\left\{\frac A{\sqrt2},\ F(\Gamma)\right\}.
\end{equation}
The row norm \(a_t\) measures concentration, while \(h_{t,k}\) measures how much weight remains before cutoff \(k\). Appendix~\ref{app-harmonic} proves the second bound using a harmonic potential, interpreted as the probability that a walk on the graph of oracle-comparison multipliers reaches the comparator before an old output or the origin. This choice cancels the signed query coefficients from the dual inequality.

\subsection{Certified finite-horizon examples}\label{sec-experiments}
For the tuned schedule in Section~\ref{sec-strict}, we compare instances with unique oracle replies against the numerical worst-case regret in \citet[Figure~1, top left]{WeibelEtAl2026}. With $L=D=1$ and $x_1=0$, we use $T=10,20,30,40,50,60$, six equally spaced horizons on their grid. At each horizon, we solve the full PEP, repair its Gram factor to make every nontrivial oracle inequality strict, and verify the resulting instance with exact integer arithmetic and outward intervals for the algebraic coefficients.

Table~\ref{tab:certified-examples} gives the certified regret lower bounds. They lie within $0.1\%$ of the published numerical values at the same horizons. Thus almost all of the regret found by the numerical searches persists with unique oracle replies. The normalized lower bounds are about $1.18$, roughly $1.4$ times the universal path bound, and hold for every exact LMO. Appendix~\ref{app-experiments} plots the comparison with the known upper bound and gives the certification and reproduction details.

\begin{table}[ht]
\centering\small\setlength{\tabcolsep}{3pt}
\caption{Certified regret lower bounds $\underline R_T$ and normalized values for the tuned schedule.}
\label{tab:certified-examples}
\begin{tabular}{r r r}
\toprule
$T$ & $\underline R_T$ & $\underline R_T/T^{3/4}$ (approx.) \\
\midrule
10 & 6.661 & 1.185 \\
20 & 11.19 & 1.183 \\
30 & 15.14 & 1.182 \\
40 & 18.78 & 1.181 \\
50 & 22.21 & 1.181 \\
60 & 25.47 & 1.182 \\
\bottomrule
\end{tabular}

\end{table}

\section{Discussion}\label{sec-discussion}
The path gives one hard instance for every span-query policy, including adaptive and randomized coefficients. Resisting rotations extend its consequence to arbitrary deterministic queries under the oracle-only feasibility requirement. For the specified fixed schedules, strict Gram mixing gives a separate guarantee that is independent of oracle tie-breaking. The remaining questions concern the leading constant and the dimension. How close can the lower constant come to the known upper bound, and what rates are unavoidable when the dimension is fixed independently of the horizon?
\label{last-main-page}
\subsection*{AI Use Statement}
The authors use ChatGPT Astra and Claude Fable for literature survey, language editing and formatting. They also use AI-assisted reviewing to obtain preliminary feedback and improve the paper. All proofs, analyses, and conclusions remain the sole responsibility of the authors

\clearpage
\bibliography{references}
\clearpage
\appendix
\section{Path geometry and gradient norms}\label{app-chain}
This section proves the three relations used in Section~\ref{sec-chain}. The matrix \(\mathbf A_{\rm p}\) is positive definite. For any \(y\in\R^M\), extend the coordinates by \(y_0=y_{M+1}=0\), and observe that
\[
y^\top\mathbf A_{\rm p}y=\sum_{i=0}^M(y_{i+1}-y_i)^2.
\]
Thus \(\mathbf K\) is positive definite, so its increments are linearly independent and the \(M+1\) path points are affinely independent.

If \(h\in\R^M\) is the indicator of a nonempty contiguous interval of length \(\ell\), then \(h^\top\mathbf A_{\rm p}h=2\) and \(h^\top h=\ell\). Since \(w_i-w_j\) is the sum of the increments in the interval \(i,\ldots,j-1\), this proves \eqref{eq-path-distance}. Its maximum \(D^2\) is attained by \(w_1,w_{M+1}\). For any two convex combinations of the vertices, the triangle inequality bounds their distance by the maximum vertex distance, so the convex hull has the same diameter.

For \(2\le i<j\le M+1\), take the indicators of the nested prefixes \(1,\ldots,i-1\) and \(1,\ldots,j-1\). Their bilinear product under \(\mathbf A_{\rm p}\) is \(1\), and their Euclidean inner product is \(i-1\). This proves \eqref{eq-path-prefix}, and the zero vertex has zero inner product with every point.

For the inverse bound, put \(\mu_{\rm p}=2/M\) and define
\[
\varrho_{\rm p}=\frac{2+\mu_{\rm p}-\sqrt{\mu_{\rm p}(4+\mu_{\rm p})}}2\in(0,1).
\]
Fix \(r\in\{1,\ldots,M\}\), let \(e_r\in\R^M\) be the standard coordinate vector, and define a vector \(\psi\in\R^M\) by
\[
\psi_j=\frac{\varrho_{\rm p}^{|j-r|}}{\sqrt{\mu_{\rm p}(4+\mu_{\rm p})}}.
\]
The geometric recurrence gives
\((\mathbf A_{\rm p}+\mu_{\rm p}I_M)\psi\ge e_r\) coordinatewise, because equality holds on the infinite integer path and omitting an endpoint neighbor adds a positive term. The finite matrix has an entrywise nonnegative inverse. Indeed, if a vector \(y\) has a negative minimum, then \((\mathbf A_{\rm p}+\mu_{\rm p}I_M)y\) is negative at a minimizing coordinate. This also covers endpoints by extending them by zero. Hence
\[
[(\mathbf A_{\rm p}+\mu_{\rm p}I_M)^{-1}]_{rr}
\le\frac1{\sqrt{\mu_{\rm p}(4+\mu_{\rm p})}}
\le\frac{\sqrt M}{2\sqrt2}.
\]
If \(U\) is the square matrix with columns \(\Delta_i\), then \(U^\top U=\mathbf K\) and \(\Delta_r^*=U\mathbf K^{-1}e_r\). Its squared norm is \((\mathbf K^{-1})_{rr}\). Multiplying the last bound by \(4/D^2\) proves \eqref{eq-path-inverse}, including \(M=1\).

\section{The deterministic rotation and replay}\label{app-rotation}
We give the full embedding argument for Theorem~\ref{thm-main}. Fix the ambient dimension and the deterministic learner first. Let \(n_{\rm call}=b(T-1)\), so \(M=n_{\rm call}+1\) and \(d=M+n_{\rm call}\). Use the path from Section~\ref{sec-chain} as an abstract instance in \(\R^M\). In this section, \(\bar g_t=c_{\rm p}\Delta_{k_t}^*\) denotes its gradient before embedding.

Maintain a linear isometry \(\mathcal U:\R^M\to\R^d\), an abstract subspace \(\mathcal V\) spanned by all gradients and replies already revealed, and its fixed image \(\mathcal S=\mathcal U\mathcal V\). Also maintain a normal subspace \(\mathcal N\) such that \(\mathcal U(\R^M)\perp\mathcal N\). Initially \(\mathcal V=\mathcal S=\mathcal N=\{0\}\), and any isometry can be used. Revealing a gradient or reply enlarges \(\mathcal V\) and \(\mathcal S\), and every subsequent isometry must agree on \(\mathcal V\).

At a query \(q\), decompose it orthogonally as
\[
q=q_{\mathcal S}+q_{\mathcal N}+r,
\qquad r\perp(\mathcal S+\mathcal N).
\]
If \(r\ne0\), replace the isometry only on \(\mathcal V^\perp\), placing its image in \((\mathcal S+\mathcal N+\operatorname{span}\{r\})^\perp\), and enlarge \(\mathcal N\) by \(r\). To check existence, write \(h=\dim\mathcal V\) and \(n=\dim\mathcal N\) before the call. The target complement has dimension \(d-h-n-1\), whereas the unused abstract space has dimension \(M-h\). Before any call \(n\le n_{\rm call}-1\), so \(d-h-n-1\ge M-h\). An orthonormal-basis extension supplies the required isometry. If \(r=0\), no change is needed.

For every abstract vertex \(w_j\), the updated embedding satisfies
\[
\ip{q}{\mathcal U w_j}=\ip{q_{\mathcal S}}{\mathcal U w_j}.
\]
The inverse image of \(q_{\mathcal S}\) is in \(\mathcal V\), so the query is equivalent on the whole domain to a span query. Return the embedded least-index minimizing vertex and add it to the exposed span. At each new round, reveal \(\mathcal U\bar g_t\) and add \(\bar g_t\) to \(\mathcal V\).

Every partial history is compatible with the current embedded domain and its least-index selector. To see this, each previous query was the sum of a vector in the then-exposed image and a stored normal vector. Subsequent embeddings preserve the former and remain orthogonal to the latter. They preserve every vertex's old query score, including which least index minimizes it. Lemma~\ref{lem-hull} therefore applies throughout this simulation, and every decision is in the observed hull. The frontier proof of Proposition~\ref{prop-span} now shows that each decision has zero current loss and the final vertex has loss \(-c_{\rm p}\).

Let \(\mathcal U_*\) be the final isometry. Fix the domain \(\mathcal U_*\conv\{w_1,\ldots,w_{M+1}\}\), the initial point zero, the least-index selector on those vertices, and the loss gradients \(\mathcal U_*\bar g_1,\ldots,\mathcal U_*\bar g_T\). All previously exposed vectors retain their values, and the score argument above verifies each past oracle answer for this single final selector. Induction over decisions, gradient feedback and calls reproduces the simulation, since the learner is deterministic. These data can therefore be fixed before the actual play, and their regret is \(c_{\rm p}T\). The isometry preserves diameter and gradient norms. When \(n_{\rm call}=0\), there is no rotation step and the same conclusion follows directly.

\section{Unique minimizers for fixed schedules}\label{app-strict}

We prove the unique-minimizer corollary for the fixed schedule
\eqref{eq-model}. First take $L=D=1$ and $x_1=0$, and put $m=T-1$.
The formal vector list is
\[
 (g_1,\ldots,g_T,v_1,\ldots,v_m,u),
\]
where $u$ is a designated comparator. It contains $2T$ vectors. The
played points and queries are fixed linear combinations of this list,
so the explicit-comparator objective, squared distances and oracle
comparisons are linear expressions in its Gram matrix. In particular, mixing two feasible
Gram matrices mixes every comparison gap and the objective with the
same weights.

\paragraph{A strict zero-objective witness.}
We first construct unscaled vectors, denoted by tildes. At round $t$,
choose a unit vector $\widetilde g_t$ orthogonal to all previously
constructed vectors. When $t\le m$, form
\[
 \widetilde q_t=\sum_{s\le t}\eta_{ts}\widetilde g_s
                 +\sum_{s<t}\beta_{ts}\widetilde v_s.
\]
Let $\widetilde p_0=0$ and $\widetilde p_j=\widetilde v_j$ for $1\le j<t$.
Choose a parent index $\pi_t\in\{0,\ldots,t-1\}$ minimizing
$\langle\widetilde q_t,\widetilde p_j\rangle$, and write
\[
 \mathfrak m_t=\ip{\widetilde q_t}{\widetilde p_{\pi_t}}\le0.
\]
The fresh gradient gives a direction that improves the current query without changing previous query values. Contracting the parent toward the origin ensures that every earlier chosen reply remains strictly better than the new point.
Choose $\upsilon_t\in(0,1)$ so that
$(1-\upsilon_t)|\mathfrak m_t|<|\eta_{tt}|/2$. Such a choice exists
because $\eta_{tt}\ne0$. For example, one may take
\[
 \upsilon_t=1-\min\left\{\frac12,
                  \frac{|\eta_{tt}|}{2(1+|\mathfrak m_t|)}\right\}.
\]
Take a fresh unit vector $\xi_t$ orthogonal to every vector constructed
so far, including $\widetilde g_t$, and set
\begin{equation}\label{eq-strict-parent}
 \widetilde v_t=\upsilon_t\widetilde p_{\pi_t}+\xi_t
                    -\operatorname{sign}(\eta_{tt})\widetilde g_t.
\end{equation}
Finally take a unit comparator $\widetilde u$ orthogonal to all these
vectors. There are $T$ fresh gradient directions, $m$ fresh directions
$\xi_t$, and one comparator direction, so the construction uses $2T$
dimensions.

Freshness of $\widetilde g_t$ gives
\[
 \ip{\widetilde q_t}{\widetilde v_t}
       =\upsilon_t\mathfrak m_t-|\eta_{tt}|
       <\mathfrak m_t-\frac{|\eta_{tt}|}{2}<0.
\]
Thus the new reply strictly improves on every old point, including the
origin. It also preserves every previous strict comparison. Indeed,
for $s<t$, the two new directions in \eqref{eq-strict-parent} are
orthogonal to $\widetilde q_s$. If $\widetilde v_s$ is the strict
minimizer over the points already constructed, then
\begin{align*}
 \ip{\widetilde q_s}{\widetilde v_t}
 &=\upsilon_t\ip{\widetilde q_s}{\widetilde p_{\pi_t}}\\
 &\ge\upsilon_t\ip{\widetilde q_s}{\widetilde v_s}
  >\ip{\widetilde q_s}{\widetilde v_s},
\end{align*}
where the last inequality uses its negative value and
$0<\upsilon_t<1$. The comparator has score zero at every issued query.
Induction therefore makes $\widetilde v_t$ the unique minimizer of
$\widetilde q_t$ over the completed vertex set
$\{0,\widetilde v_1,\ldots,\widetilde v_m,\widetilde u\}$.
Only the comparison of a reply with itself is excluded from strictness.

The unscaled outputs follow parent paths in which each edge multiplies
earlier coordinates by a number in $(0,1)$. Each output index contributes
two orthogonal coordinates, one from $\widetilde g_j$ and one from
$\xi_j$, with coefficient magnitude at most one. In the difference of
two outputs, each such coefficient also has magnitude at most one,
because shared coordinates have the same sign. The union of two parent
paths uses at most $m$ output indices. Hence their squared distance is
at most $2m$, and the squared distance of an output to the fresh
comparator is at most $2m+1$.

Multiply every formal vector by
\[
 \rho_{\rm str}=\frac1{2\sqrt{2m+1}}
\]
and let $\mathbf G_{\rm str}$ be their Gram matrix. Its gradients have
norm at most one, its point set has diameter at most $1/2$, and all
strict comparisons remain strict because every query--point inner
product is multiplied by $\rho_{\rm str}^2$. The vectors are linearly
independent because, in descending output order, each output has a fresh
$\xi_t$ coordinate, and the gradient and comparator directions are
independent. Thus $\mathbf G_{\rm str}\succ0$. Moreover, each fresh
gradient is orthogonal to the earlier output hull and to the comparator,
so the explicit-comparator objective of this witness is zero.

\paragraph{Mixing with the chain.}
Run the fixed schedule on the one-call chain construction, with its
specified least-index selector, and let $\mathbf G_{\rm ch}$ be the
Gram matrix of the same formal vector list. Its point diameter is one,
its gradient norms are at most one, and its explicit-comparator
objective is at least $2^{-1/4}T^{3/4}$. For any $\omega\in(0,1)$, set
\[
 \mathbf G_{\rm mix}=(1-\omega)\mathbf G_{\rm ch}
                         +\omega\mathbf G_{\rm str}.
\]
Every non-self comparison gap is strictly positive because the chain
contributes a nonnegative gap and the strict witness contributes a
positive gap. The matrix is positive definite and can be realized in
$\mathbb R^{2T}$. Its gradient norms are at most one and its point
diameter is at most one. Its explicit-comparator objective is at least
\[
 (1-\omega)2^{-1/4}T^{3/4}.
\]
Induction on the calls now realizes this transcript with every exact
LMO on the constructed polytope, since each prescribed reply is the
unique minimizer at the query actually issued. The claim concerns
these issued queries. Other query directions may have several
minimizers.

\paragraph{Exact diameter and scaling.}
The origin is one of the existing points. Let $\chi$ be the largest
squared norm of these points after the mixture is factored. The
diameter constraints give $\chi\le1-3\omega/4<1$, because the chain
point distances are at most one and the strict point distances are
at most $1/2$. Choose a unit vector $\nu$ orthogonal to all $2T$
formal vectors and add
\[
 p_+=\sqrt{1-\chi}\,\nu.
\]
Its squared distance to an old point $p$ is $1-\chi+\|p\|_2^2\le1$,
with equality at a largest-norm point. The enlarged polytope therefore
has diameter exactly one. Every issued query has score zero at $p_+$,
whereas its designated reply has strictly negative score, so padding
preserves uniqueness. The dimension is at most $2T+1$.

For arbitrary $L,D>0$, apply the normalized construction to the
coefficient arrays $((L/D)\eta,\beta,\gamma)$, with the same horizon
and the chosen ambient dimension. Then multiply gradients by $L$
and points by $D$. Each physical query is $D$ times its normalized
query, so all minimizing replies agree and each query--point comparison
is multiplied by $D^2$. Norms and diameter acquire the desired bounds,
while regret is multiplied by $LD$. The nonzero-current-gradient
assumption is unchanged.

Taking $\omega=1/10$ proves the stated coefficient $3/4$, since
\[
 \frac9{10}2^{-1/4}>\frac34
\]
is equivalent to $2<(6/5)^4$. This argument also works at $T=1$, when
there are no calls, the strictness condition is vacuous, and the
witness has only its gradient and comparator vectors. More generally,
any lower constant strictly below $2^{-1/4}$ follows by choosing a
sufficiently small positive $\omega$.

\section{Canonicalization and the exact Gram problem}
\label{app-canonicalization}

We prove the schedule-specific bounds from Section~\ref{sec-bounds}, with initial point $x_1=0$ and normalized bounds $L=D=1$.

\subsection{Removing redundant calls}

An oracle $\mathcal O$ is \emph{invariant under positive rescaling} if
\[
\mathcal O(cq)=\mathcal O(q)\qquad(c>0).
\]
The adversary may choose an exact oracle whose selection has this property.
Before choosing an instance, a formal query is its coefficient vector over the gradient and output symbols. Queries on the same positive ray must receive the same reply from this oracle, so we merge their calls and add their decision weights.

\begin{lemma}[Canonicalization]
\label{app-lem-canonicalization}
Fix a deterministic scheme with the coefficient restrictions in \eqref{eq-model}. There is a reduced scheme with at most one retained call per round and fixed coefficients satisfying the following properties.
\begin{enumerate}
\item the original and reduced schemes have identical played points against every exact oracle invariant under positive rescaling.
\item the reduced iterate coefficients are nonnegative and their row sums are at most one.
\item at most one retained query is formally zero, and no two retained nonzero queries are formally positive multiples.
\end{enumerate}
\end{lemma}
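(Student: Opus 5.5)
I will prove the canonicalization lemma by an induction over calls in their natural order, building the reduced scheme while tracking, for each original call, a formal \emph{alias}: the retained call whose reply it is guaranteed to share under every exact oracle invariant under positive rescaling. Because the coefficients \(\eta,\beta,\gamma\) are fixed, every query is a fixed formal linear combination of gradient symbols \(g_s\) and output symbols \(v_s\) (with \(x_1=0\)). Formal equality of combinations therefore implies equality of the actual query vectors on every instance. The plan has three steps: define the aliases and the reduced coefficients, check that played points agree, and check the coefficient and distinctness properties.

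\textbf{Construction.} I process the calls \(t=1,\dots,T-1\) in order. First I rewrite the query \(q_t\) by substituting, for each earlier output symbol \(v_s\), the symbol of its retained alias. This gives a formal vector \(\hat q_t\) over the gradient symbols and the retained output symbols. I then distinguish three cases.
\begin{enumerate}
\item If \(\hat q_t\) is formally zero and a zero query was already retained, call \(t\) is aliased to that retained call.
\item If \(\hat q_t=c\,\hat q_r\) for some earlier retained nonzero query \(\hat q_r\) and some \(c>0\), call \(t\) is aliased to \(r\).
\item Otherwise call \(t\) is retained, with the substituted coefficients as its query coefficients.
\end{enumerate}
For the decisions, the reduced weight of a retained call \(r\) in decision \(t\) is the sum of the original \(\gamma_{ts}\) over all \(s<t\) aliased to \(r\). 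This sum ranges only over calls issued before decision \(t\), and every alias points to an equal or earlier retained call, so it is well defined. Each round retains at most one call, since the original scheme has one call per round.

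\textbf{Identical played points.} I check by induction on the calls that, against any exact oracle \(\mathcal O\) invariant under positive rescaling, each original reply \(v_t\) equals the reply of its alias. Suppose this holds for all earlier calls. Then the actual query \(q_t\) equals the evaluation of \(\hat q_t\) on the reduced run. In the zero case both queries are the zero vector, and \(\mathcal O\) is a fixed map, so the replies coincide. In the positive-multiple case, \(\mathcal O(c\,\hat q_r)=\mathcal O(\hat q_r)\) by invariance. Summing the merged decision weights then reproduces every \(x_t\), which gives item 1.

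\textbf{Coefficients and distinctness.} Item 2 holds because merging only adds nonnegative \(\gamma\) entries, so the reduced weights stay nonnegative and each row sum is unchanged, hence still at most one. Item 3 holds by the retention rule: at most one zero query is kept, and no retained nonzero query is a positive multiple of an earlier one. The relation "positive multiple" is symmetric, so this also excludes later-to-earlier pairs.

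The main subtlety I expect is \textbf{well-definedness of the substitution}. A later query may reference an aliased output, and the comparison with earlier queries must be made after substitution, since two queries may coincide only after aliasing. Processing the calls in order and always substituting before comparing handles this. The formal comparison is then a syntactic check on fixed coefficient vectors that does not depend on the instance, which is what makes the reduced scheme a fixed-coefficient scheme.
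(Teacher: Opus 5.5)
Your proposal is correct and follows essentially the same route as the paper. Both process the calls chronologically, substitute aliases before comparing, alias a redundant zero query or a positive multiple of an earlier retained query, merge the decision weights, and prove identical transcripts by induction over calls using invariance under positive rescaling. The paper also explicitly discards any call after the last loss, which your indexing $t=1,\dots,T-1$ already excludes.
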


\begin{proof}
After discarding the call after the last loss, process the remaining calls chronologically. In each direction, substitute the retained aliases of previously removed outputs, then express the result as a coefficient vector in the gradient and retained output symbols.

If this vector is zero and an earlier zero query has been retained, remove the call and alias its output to that earlier output. If it is a positive multiple of an earlier retained nonzero coefficient vector, remove it and use that earlier output as its alias. Otherwise retain the call and introduce its output as a new formal symbol. Coefficient vectors may be padded with zeros when new symbols are introduced.

All decisions and substitutions depend only on the coefficient arrays. Against an oracle invariant under positive rescaling, each removed call receives the answer prescribed by its alias, so induction over calls gives identical original and reduced transcripts. Substitution adds the iterate weights assigned to aliased outputs, preserving nonnegativity and total mass. Causality is preserved because every alias is an earlier retained output, and the retained queries have the asserted formal properties by construction.
\end{proof}

Let $m$ be the number of retained calls and let $\tau_r$ be the round of retained call $r$, where $r\in\{1,\ldots,m\}$. In the one-call model,
\[
\tau_1<\cdots<\tau_m\le T-1,\qquad m\le T-1.
\]
Write $v_r$ for retained output $r$. The reduced coefficient matrices are
\[
E\in\mathbb R^{m\times T},\qquad
B\in\mathbb R^{m\times m},\qquad
\Gamma\in\mathbb R^{T\times m}.
\]
They satisfy
\[
E_{rt}=0\ \text{if }t>\tau_r,\qquad
B_{rj}=0\ \text{if }j\ge r,\qquad
\Gamma_{tr}=0\ \text{if }\tau_r\ge t,
\]
and $\Gamma_{tr}\ge0$, $\sum_r\Gamma_{tr}\le1$. Thus
\begin{equation}
q_r=\sum_{t=1}^T E_{rt}g_t+\sum_{j=1}^m B_{rj}v_j,
\qquad
x_t=\sum_{r=1}^m\Gamma_{tr}v_r.
\label{app-eq-reduced-model}
\end{equation}
Empty matrices and sums have their usual meanings when $m=0$.

\subsection{Primal and dual formulations}
\label{app-pep-duality}

Define the point index set
\[
J=\{0,1,\ldots,m,\star\},
\qquad
p_0=0,\quad p_r=v_r\ (1\le r\le m),\quad p_\star=u,
\]
where $u$ is a comparison point. Order these indices as $0<1<\cdots<m<\star$. For $j\in J\setminus\{0\}$, let $e_j$ denote the corresponding standard basis vector of $\mathbb R^{m+1}$, and set $e_0=0$. Let
\[
N=T+m+1,\qquad
\bar B=[B,0]\in\mathbb R^{m\times(m+1)},\qquad
H=[\Gamma,-\mathbf 1]\in\mathbb R^{T\times(m+1)},
\]
where $\mathbf 1$ is the length-$T$ vector of ones. Define the embedding $\iota:\mathbb R^{m+1}\to\mathbb R^N$ by $\iota(z)=(0,z)$, with $T$ initial zero coordinates. The formal coefficient vector of retained query $r$ is
\[
\vartheta_r=(E_{r,:}^{\mathsf T},\bar B_{r,:}^{\mathsf T})\in\mathbb R^N.
\]

Let $\mathbb S^N$ denote the real symmetric $N\times N$ matrices. A Gram matrix $\mathbf G\in\mathbb S^N$ is ordered by the vectors
\[
(g_1,\ldots,g_T,v_1,\ldots,v_m,u)
\]
and partitioned into blocks $\mathbf G_{gg}$, $\mathbf G_{gp}$, and $\mathbf G_{pp}$, with gradient block size $T$ and point block size $m+1$. For matrices of equal size, $\langle\cdot,\cdot\rangle_F$ denotes the Frobenius inner product. Define the objective
\[
\Phi(\mathbf G)=\langle H,\mathbf G_{gp}\rangle_F.
\]
The transcript performance-estimation problem is
\begin{equation}
\begin{aligned}
\mathsf V=\max_{\mathbf G\in\mathbb S^N}\quad
 &\Phi(\mathbf G)\\
\text{subject to}\quad
 &\mathbf G\succeq0,\\
 &(\mathbf G_{gg})_{tt}\le1 &&(1\le t\le T),\\
 &(e_i-e_j)^{\mathsf T}\mathbf G_{pp}(e_i-e_j)\le1
 &&(i,j\in J,\ i<j),\\
 &\vartheta_r^{\mathsf T}\mathbf G\,\iota(e_r-e_j)\le0
 &&(1\le r\le m,\ j\in J\setminus\{r\}).
\end{aligned}
\label{app-eq-primal-full}
\end{equation}
The last constraints impose every oracle comparison, including comparisons with future outputs, the origin, and the comparison point.

To specify the dual, let $\lambda_t\ge0$ multiply the gradient constraint at round $t$, let $\mu_{ij}\ge0$ multiply the distance constraint for $i<j$, and let $\alpha_{rj}\ge0$ multiply the oracle constraint for call $r$ and point $j\ne r$. Set $\alpha_{rr}=0$. Define
\[
\Lambda=\sum_{t=1}^T\lambda_t,\qquad
\Upsilon=\sum_{\substack{i,j\in J\\i<j}}\mu_{ij},
\]
\[
\mathbf Q_{r,:}=\sum_{j\in J}\alpha_{rj}(e_r-e_j)^{\mathsf T},
\qquad
\mathbf P_\mu=\sum_{\substack{i,j\in J\\i<j}}
\mu_{ij}(e_i-e_j)(e_i-e_j)^{\mathsf T}.
\]
Thus $\mathbf Q\in\mathbb R^{m\times(m+1)}$. For a square matrix $W$, define $\operatorname{sym}(W)=(W+W^{\mathsf T})/2$. Write $\lambda=(\lambda_1,\ldots,\lambda_T)^{\mathsf T}$ and let $\operatorname{diag}(\lambda)$ denote the diagonal matrix with these entries. The dual minimizes $\Lambda+\Upsilon$ subject to the nonnegative multipliers and the following matrix constraint.
\begin{equation}
\mathbf M=
\begin{pmatrix}
\operatorname{diag}(\lambda)&(E^{\mathsf T}\mathbf Q-H)/2\\
(\mathbf Q^{\mathsf T}E-H^{\mathsf T})/2&
\mathbf P_\mu+\operatorname{sym}(\bar B^{\mathsf T}\mathbf Q)
\end{pmatrix},
\qquad \mathbf M\succeq0.
\label{app-eq-dual-full}
\end{equation}
We use $\mathbf M_{pp}$ for its lower-right point block.

\begin{lemma}[Gram realization and strong duality]
\label{app-lem-duality}
Problem~\eqref{app-eq-primal-full} has a finite attained optimum. Every feasible matrix admits a vector realization in dimension at most $N$ satisfying the gradient, diameter and oracle comparisons, with comparison objective $\Phi(\mathbf G)$. Lemma~\ref{app-lem-fixed-oracle} realizes an arbitrarily large fraction of the optimum with a fixed selector. The primal and dual have no duality gap.
\end{lemma}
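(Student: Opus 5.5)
Three things need proof: the optimum is finite and attained, every feasible Gram matrix can be realized by vectors satisfying all the constraints, and there is no duality gap. The fixed-selector part is deferred to Lemma~\ref{app-lem-fixed-oracle}, so I will only cite it.

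\emph{Finiteness and attainment.} I will show that the feasible set of \eqref{app-eq-primal-full} is nonempty, closed and bounded. It is nonempty because $\mathbf G=0$ satisfies every constraint with equality. It is closed because it is an intersection of the PSD cone with finitely many closed half-spaces. For boundedness, the gradient diagonals are at most one. Every point $p_j$ with $j\ne0$ has $\|p_j\|^2=(e_0-e_j)^{\mathsf T}\mathbf G_{pp}(e_0-e_j)\le1$, since $p_0=0$ and the diameter constraint with the origin applies. So all diagonal entries of $\mathbf G$ are at most one. For a PSD matrix this gives $|\mathbf G_{ik}|\le1$ for every entry. The objective $\Phi$ is linear, hence continuous, so by compactness the maximum $\mathsf V$ is finite and attained.

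\emph{Vector realization.} I will factor a feasible $\mathbf G$ as $\mathbf G=Y^{\mathsf T}Y$ with $Y\in\mathbb R^{N\times N}$, for instance by the symmetric square root, and read off the columns as $g_t,v_r,u$. Each constraint is a linear functional of the Gram entries, so the realized vectors satisfy:
\begin{itemize}
\item the gradient norm bounds;
\item the pairwise distance bounds, with the origin included through $e_0=0$;
\item the oracle comparisons $\langle q_r,v_r-p_j\rangle\le0$, where $q_r=Y\vartheta_r$ is the query given by \eqref{app-eq-reduced-model}.
\end{itemize}
The decisions are $x_t=\sum_r\Gamma_{tr}v_r$, and $\sum_t\langle g_t,x_t-u\rangle=\langle H,\mathbf G_{gp}\rangle_F=\Phi(\mathbf G)$. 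The diameter of the convex hull of the points equals the maximum pairwise vertex distance, by the triangle-inequality argument already used in Appendix~\ref{app-chain}. Hence $v_r$ minimizes $q_r$ over this polytope, and these data form a valid instance.

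\emph{Strong duality.} This is the main obstacle, because the primal has no Slater point: the zero matrix makes the oracle constraints tight. My first approach is to apply conic duality from the dual side.
\begin{itemize}
\item The dual is strictly feasible. Take $\alpha=0$, every $\lambda_t$ and $\mu_{ij}$ large, and $\mathbf P_\mu\succ0$, which holds when every $\mu_{ij}>0$ because the graph on $J$ is complete and contains the anchor $e_0=0$. A Schur complement argument then gives $\mathbf M\succ0$.
\item The primal optimum is finite by the first step.
\item Standard conic duality (Rockafellar, Ben-Tal--Nemirovski) then says that strict dual feasibility together with a bounded primal implies equal values, with the primal optimum attained. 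Attainment was shown directly anyway.
\end{itemize}
As a cross-check, weak duality follows from the identity
\[
\Lambda+\Upsilon-\Phi(\mathbf G)=\langle\mathbf M,\mathbf G\rangle+\text{(nonnegative slacks)},
\]
which requires matching the bilinear terms $\vartheta_r^{\mathsf T}\mathbf G\,\iota(e_r-e_j)$ against the off-diagonal blocks of \eqref{app-eq-dual-full}. I expect the only delicate part of that verification to be the sign and symmetrization bookkeeping involving $\bar B$ and $\mathbf Q$.
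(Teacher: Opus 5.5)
Your proposal is correct and follows the paper's proof in all three parts: diagonal bounds plus positive semidefiniteness give compactness, the factorization $\mathbf G=Y^{\mathsf T}Y$ gives the realization, and conic strong duality follows from a strictly feasible dual point. The only difference is that dual point: the paper makes every multiplier positive and adds a common $\zeta$ to each $\lambda_t$ and each $\mu_{0j}$, which adds $\zeta I$ to $\mathbf M$ directly, whereas your choice $\alpha=0$ lies on the boundary of the multiplier cone, so you should either invoke the refined Slater condition for the linear constraints $\alpha\ge0$ or note that a small positive $\alpha$ keeps $\mathbf M\succ0$ because positive definiteness is an open condition.
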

\begin{proof}[Proof of Lemma~\ref{app-lem-duality}]
The zero matrix is primal feasible. Distance constraints involving the origin bound every point diagonal entry by one, and gradient constraints do the same for gradient diagonal entries. Positive semidefiniteness then bounds every entry in absolute value by one. The feasible set is closed and bounded, hence compact, so the optimum is finite and attained.

Factor a feasible matrix as $\mathbf G=Z^{\mathsf T}Z$ and use the columns of $Z$ as the indicated vectors. The constraints give their norm and distance bounds and the oracle inequalities. The domain $\mathcal C=\operatorname{conv}\{p_j:j\in J\}$ has diameter at most one, since the distance between two convex combinations is at most the largest distance between vertices. Each indicated response minimizes its linear functional over every vertex and hence over $\mathcal C$. The objective is
\[
\Phi(\mathbf G)=\sum_t\langle g_t,x_t-u\rangle,
\]
which does not exceed regret because $u\in\mathcal C$. Conversely, the Gram data of any admissible transcript and any feasible comparison point satisfy the program. A possible conflict between repeated queries and a fixed oracle selection is handled separately in Section~\ref{app-fixed-oracle}.

The Lagrangian has the form
\[
\Lambda+\Upsilon-\langle\mathbf M,\mathbf G\rangle_F.
\]
The oracle contribution to its negative Gram coefficient is the sum of symmetric products of $\vartheta_r$ and $\iota(e_r-e_j)$ with weights $\alpha_{rj}$. Its gradient--point block is $E^{\mathsf T}\mathbf Q/2$, and its point block is $\operatorname{sym}(\bar B^{\mathsf T}\mathbf Q)$. Together with the objective contribution $-H/2$ in the gradient--point block, these give~\eqref{app-eq-dual-full}. The supremum of the Lagrangian over $\mathbf G\succeq0$ is finite precisely when $\mathbf M\succeq0$.

To obtain a strictly feasible dual point, choose every scalar multiplier positive, then add a common amount $\zeta>0$ to each $\lambda_t$ and each origin-distance multiplier $\mu_{0j}$, $j\ne0$. This adds $\zeta$ times the identity matrix to $\mathbf M$, making it positive definite for sufficiently large finite $\zeta$ while all scalar multipliers remain positive. Semidefinite strong duality \citep[Section~5.9.1]{BoydVandenberghe2004} therefore equates the primal maximum with the dual infimum, whether or not the latter is attained.
\end{proof}

\subsection{Concentrated weights}
Define the row norms and their sum by
\[
a_t=\left(\sum_{r=1}^m\Gamma_{tr}^2\right)^{1/2},
\qquad A=\sum_{t=1}^T a_t.
\]
\begin{proposition}\label{prop-concentration}
For \(T\ge2\), the optimum in \eqref{app-eq-primal-full} satisfies \(\mathsf V\ge A/\sqrt2\).
\end{proposition}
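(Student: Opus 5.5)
The plan is to exhibit one explicit feasible Gram matrix for \eqref{app-eq-primal-full} whose objective equals \(A/\sqrt2\). The picture is the simplex idea of Section~\ref{sec-bounds}: every retained output gets its own orthogonal direction, and each gradient points along its decision's weight vector.

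\emph{Simplex step.} Take orthonormal vectors \(f_1,\ldots,f_m\) and set \(v_r=f_r/\sqrt2\). I would also set \(u=0\). This is a legitimate point, since the comparator in the program is a free point subject only to distance bounds, and the origin lies in the list. With this choice:
\begin{itemize}
\item pairwise squared distances among the outputs equal \(1\);
\item squared distances from outputs to the origin and to \(u\) equal \(1/2\);
\item so all diameter constraints hold.
\end{itemize}
For each round with \(a_t>0\) set \(g_t=a_t^{-1}\sum_r\Gamma_{tr}f_r\), and set \(g_t=0\) otherwise. Each \(g_t\) has norm at most one. Since \(x_t=2^{-1/2}\sum_r\Gamma_{tr}f_r\),
\[
\ip{g_t}{x_t-u}=\frac{a_t}{\sqrt2},
\]
and summing over \(t\) gives \(\Phi=A/\sqrt2\). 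The causal structure helps: \(\Gamma_{tr}=0\) whenever \(\tau_r\ge t\), so \(g_t\) only involves directions of outputs issued strictly before round \(t\).

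\emph{Oracle constraints.} These are the main obstacle. Query \(r\) is a combination of \(g_t\) with \(t\le\tau_r\) and of \(v_j\) with \(j<r\). The directions \(f_r\) and \(f_j\) for \(j>r\) are orthogonal to everything in \(q_r\), so \(v_r\), all later outputs and the comparator score \(0\). The earlier outputs \(v_j\) with \(j<r\) may, however, score negatively, which would violate optimality of \(v_r\).

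\emph{Repair of the oracle constraints.} I would first try the parent trick of Appendix~\ref{app-strict}. Let \(\pi_r\) be a minimizing earlier point of \(q_r\), and set
\[
v_r=\upsilon\,p_{\pi_r}+c\,f_r
\]
with a fresh \(f_r\). Then \(q_r\) scores \(v_r\) at \(\upsilon\) times the current minimum, so \(v_r\) is not worse whenever that minimum is at most zero. Earlier queries \(q_s\) score \(v_r\) at \(\upsilon\) times their score on the parent, so the induction survives exactly as in that appendix. Two things then have to be controlled:
\begin{itemize}
\item the parent components must not reduce \(\ip{g_t}{x_t}\), so I would define \(g_t\) to be supported on the fresh directions \(f_r\) only;
\item distances must stay bounded, which I would ensure by a uniform scaling in which parent contributions form a geometrically shrinking tail.
\end{itemize}

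\emph{Fallback if the parent trick costs the \(\sqrt2\).} In that case I would instead exploit freedom in the signs or placement of the components of \(g_t\). The aim is to make every earlier output score at least zero under every later query: for example, by giving the gradient directions an extra component along a coordinate shared by all points, which shifts scores without changing distances.

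\emph{Role of \(T\ge2\).} I expect this hypothesis only to guarantee \(m\ge1\) or to exclude a degenerate normalization. It is where I would check the boundary cases, namely \(m=0\) and rows with \(a_t=0\), at the end.
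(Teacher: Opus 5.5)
\noindent Your first construction fails for the reason you give, but neither repair yields a feasible point of \eqref{app-eq-primal-full} with objective \(A/\sqrt2\).

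\emph{The parent trick with \(\upsilon<1\).} This breaks the comparison at the current query. Since \(f_r\perp q_r\), you get \(\ip{q_r}{v_r}=\upsilon\,\mathfrak m\) with \(\mathfrak m=\ip{q_r}{p_{\pi_r}}\). When \(\mathfrak m<0\) and \(\upsilon<1\), this exceeds \(\mathfrak m\), so \(v_r\) is strictly beaten by its own parent. Appendix~\ref{app-strict} avoids this only through the extra term \(-\operatorname{sign}(\eta_{tt})\widetilde g_t\), which needs \(\eta_{tt}\ne0\) and serves a zero-objective witness.

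\emph{The parent trick with \(\upsilon=1\).} As in Lemma~\ref{app-lem-pd-witness}, the comparisons now hold. But outputs are sums of \(cf_j\) along parent paths of length up to \(m\), so the diameter bound forces \(c\le m^{-1/2}\). For example, take \(q_r=-v_{r-1}\) for \(r\ge2\), which forces the chain \(\pi_r=r-1\), and let every decision \(x_t\) with \(t\ge2\) equal \(v_1\). Then \(A=T-1\), whereas with \(u=0\) your objective is at most \((T-1)c\le(T-1)/\sqrt m\). That loses a factor of order \(\sqrt m\), not \(\sqrt2\).

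\emph{The fallback.} This cannot help either. A component along a direction on which all outputs agree shifts their scores equally, so it never changes a comparison \(\ip{q_r}{v_r-v_j}\).

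A negative \(B_{rj}\) lets an earlier output beat every point orthogonal to \(q_r\). So overlap between new and old outputs is unavoidable. Your fixed gradients \(a_t^{-1}\sum_r\Gamma_{tr}f_r\) presuppose distinct directions, and so cannot serve as the certificate.

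The missing idea is to let the oracle reuse earlier vertices and to show that reuse only helps.
\begin{itemize}
\item Keep every output a vertex of \(\conv\{0,\varepsilon_1/\sqrt2,\ldots,\varepsilon_T/\sqrt2\}\). Then the diameter bound holds however replies repeat.
\item Let each reply be a minimizing nonzero vertex. Queries lie in the span of previously returned vertices, and \(m\le T-1\), so some unvisited vertex scores zero. Hence the reply is at least as good as the origin and as \(u=0\), though it may be an old vertex. (The program also compares each reply with \(u\); with \(u=0\) these are just the origin comparisons.)
\item Choose the gradients recursively as \(g_t=x_t/\norm{x_t}\), or \(g_t=0\) if \(x_t=0\). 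The scheme is deterministic, so this sequence is fixed in advance.
\end{itemize}
Writing \(i_r\) for the vertex returned at call \(r\), nonnegativity of \(\Gamma\) gives
\[
\norm{x_t}^2=\frac12\sum_{i=1}^T\Bigl(\sum_{r:i_r=i}\Gamma_{tr}\Bigr)^2\ge\frac12\sum_r\Gamma_{tr}^2=\frac12a_t^2 .
\]
With \(u=0\), the objective is therefore \(\sum_t\norm{x_t}\ge A/\sqrt2\). The normalized adaptive gradient is essential here. Once outputs coincide, a gradient built from one direction per call either exceeds norm one or wastes mass on unused directions.
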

\begin{proof}
Let \(\varepsilon_1,\ldots,\varepsilon_T\) be orthonormal vectors and take
\(\mathcal C=\conv\{0,\varepsilon_1/\sqrt2,\ldots,\varepsilon_T/\sqrt2\}\).
Let the oracle select the smallest-index minimizing nonzero vertex whenever such a vertex is at least as good as zero, and return zero otherwise. Recursively set \(g_t=x_t/\norm{x_t}\) when \(x_t\ne0\), and set \(g_t=0\) otherwise.

Every query is supported on previously returned coordinates, irrespective of the signs in \(E,B\), while at least one unvisited coordinate has query value zero. If \(i_r\) is the coordinate index of the returned nonzero vertex at call \(r\), then
\[
\norm{x_t}^2
=\frac12\sum_{i=1}^T\left(\sum_{r:i_r=i}\Gamma_{tr}\right)^2
\ge\frac12 a_t^2.
\]
The inequality uses \(\Gamma_{tr}\ge0\). The gradients are coordinatewise nonnegative, so zero is a best comparator and regret equals \(\sum_t\norm{x_t}\). The recursive sequence can be computed before play and replayed unchanged.
\end{proof}

\section{The harmonic-potential argument}
\label{app-harmonic}

We lower-bound every feasible dual objective, so strong duality gives the same lower bound on \(\mathsf V\). Fix a feasible dual point of~\eqref{app-eq-dual-full} and regard its oracle multipliers \(\alpha_{rj}\) as directed edge weights from call \(r\) to comparison point \(j\). A walk follows an outgoing edge with probability proportional to its weight. For a cutoff round \(k\), give a starting point the probability that this walk reaches the comparator before the origin or an output from before \(k\). Paths that never reach these boundary points contribute zero, and a state with no outgoing weight stays put. The resulting potential is zero before the cutoff and harmonic afterwards, meaning that its value equals the weighted average of the next values. These properties cancel the signed query terms.

\begin{lemma}[Existence of a bounded harmonic potential]
\label{app-lem-harmonic-existence}
For every round $k\in\{1,\ldots,T\}$ there is a vector $z^{(k)}\in\mathbb R^{m+1}$ such that, with the convention $z^{(k)}_0=0$,
\[
z^{(k)}_\star=1,\qquad
z^{(k)}_r=0\ \text{if }\tau_r<k,\qquad
0\le z^{(k)}_r\le1,
\]
and
\[
(\mathbf Qz^{(k)})_r=0\qquad\text{if }\tau_r\ge k.
\]
\end{lemma}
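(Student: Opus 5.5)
The plan is to read the condition \((\mathbf Qz)_r=0\) as a discrete Dirichlet problem and solve it by monotone iteration, which gives the hitting-probability interpretation described before the lemma without any uniqueness or transience assumptions. By definition of \(\mathbf Q\),
\[
(\mathbf Qz)_r=\sum_{j\in J}\alpha_{rj}(z_r-z_j)=\alpha_r z_r-\sum_{j\in J}\alpha_{rj}z_j,
\qquad \alpha_r=\sum_{j\in J}\alpha_{rj},
\]
with \(z_0=0\) by convention and \(\alpha_{rr}=0\).

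\emph{Step 1: split the indices.} Fix \(k\). Call \(\star\), \(0\), and every \(r\) with \(\tau_r<k\) \emph{boundary} indices. Call \(r\) with \(\tau_r\ge k\) \emph{interior}, and split these into the interior indices with \(\alpha_r>0\) and the \emph{traps} with \(\alpha_r=0\).

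For a trap, \((\mathbf Qz)_r=0\) holds for any value of \(z_r\), because that row of \(\mathbf Q\) vanishes. We set \(z_r=0\) there. For an interior \(r\) with \(\alpha_r>0\), the required equation is exactly the averaging identity
\[
z_r=\sum_{j\in J}p_{rj}z_j,\qquad p_{rj}=\alpha_{rj}/\alpha_r,
\]
where \(p_{rj}\ge0\) and \(\sum_j p_{rj}=1\).

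\emph{Step 2: define the averaging map.} On the cube \([0,1]^{J\setminus\{0\}}\), with \(z_0=0\) appended, define \(F\) as follows. It fixes the boundary values \(F(z)_\star=1\) and \(F(z)_r=0\) for \(\tau_r<k\). It sets \(F(z)_r=0\) at traps. At the remaining interior indices it sets \(F(z)_r=\sum_j p_{rj}z_j\).

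\(F\) is affine and maps the cube into itself, since convex averages of numbers in \([0,1]\) stay in \([0,1]\). It is also monotone in the coordinatewise order, because all \(p_{rj}\ge0\).

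\emph{Step 3: iterate from the indicator of the comparator.} Start from \(z^{(0)}=\mathbf 1_{\{\star\}}\). Then \(z^{(1)}=F(z^{(0)})\ge z^{(0)}\): the boundary and trap coordinates agree, and the remaining coordinates of \(z^{(1)}\) are nonnegative while those of \(z^{(0)}\) are zero.

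By monotonicity and induction the sequence \(z^{(n)}=F^n(z^{(0)})\) is coordinatewise nondecreasing. It is bounded by one, so it converges to some \(z^{(k)}\in[0,1]^{m+1}\). Continuity of the affine map \(F\) gives \(F(z^{(k)})=z^{(k)}\). This fixed point has \(z^{(k)}_\star=1\), has \(z^{(k)}_r=0\) for \(\tau_r<k\), and satisfies the averaging identity at every interior index, hence \((\mathbf Qz^{(k)})_r=0\) whenever \(\tau_r\ge k\).

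Probabilistically, \(z^{(n)}_r\) is the probability that the walk with transitions \(p\) reaches \(\star\) within \(n\) steps before hitting another boundary point or a trap. The limit is the absorption probability described in the text, and paths that never exit contribute zero.

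\emph{Main obstacle.} The only delicate point is that the interior walk need not be transient: the multiplier graph can contain closed interior classes, and there are trap rows. Because of this, solving the linear system \((\mathbf Qz)_r=0\) directly could fail or give non-unique solutions. The monotone-iteration construction sidesteps the issue, since it selects the minimal solution without needing invertibility. The zero-row traps are handled by the observation in Step 1 that their equation is vacuous.
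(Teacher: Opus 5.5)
Your proposal is correct and follows essentially the same route as the paper: fix the boundary values, set the zero-weight rows to zero, and run the monotone averaging iteration from zero on the future-call coordinates, so the bounded nondecreasing limit is the minimal solution of the harmonic equations. The only difference is notational: your iterate superscript $z^{(n)}$ clashes with the cut index in $z^{(k)}$, which the paper avoids by writing $z^{(k,n)}$.
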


\begin{proof}
Let $\mathcal R_k=\{r:\tau_r\ge k\}$ be the set of future call indices. Keep the prescribed boundary coordinates fixed and initialize the coordinates in $\mathcal R_k$ to zero. For each $r\in\mathcal R_k$, define the total outgoing weight
\[
\kappa_r=\sum_{j\in J}\alpha_{rj}.
\]
Let \(n\ge0\) denote the iteration index and write \(z^{(k,n)}\) for the iterate associated with cut \(k\). If \(\kappa_r>0\), iterate
\[
z_r^{(k,n+1)}=
\frac{\alpha_{r\star}+\sum_{j\in\mathcal R_k}\alpha_{rj}z_j^{(k,n)}}
{\kappa_r}.
\]
If \(\kappa_r=0\), keep $z_r^{(k,n)}=0$. The nonnegative update coefficients make the sequence coordinatewise nondecreasing, while the numerator is at most the denominator whenever the coordinates are bounded by one. Thus every coordinate converges to a limit \(z^{(k)}\in[0,1]^{m+1}\), and passing to the limit in the finite sums gives the harmonic equations. Iteration from zero gives the minimal nonnegative solution. Each iterate counts successful absorption within a finite number of steps, so its limit has the probability interpretation above, including on closed classes that never reach the boundary.
\end{proof}

For $1\le k\le t\le T$, define the stale mass
\[
h_{t,k}=1-\sum_{r:k\le\tau_r<t}\Gamma_{tr}.
\]
This is the mass on the initial point and on outputs from rounds before $k$, so it belongs to $[0,1]$. Define the sum of its squares over cutoffs by
\[
s_t=\sum_{k=1}^t h_{t,k}^2.
\]

\begin{proposition}[Harmonic bound]\label{prop-harmonic}
\label{app-lem-cut}
Every feasible dual point has $\lambda_t>0$ for all $t$ and satisfies, for every round $k$,
\begin{equation}
\Upsilon\ge\frac14\sum_{t=k}^T\frac{h_{t,k}^2}{\lambda_t}.
\label{app-eq-cut}
\end{equation}
Consequently,
\begin{equation}
\mathsf V\ge\frac1{\sqrt T}\sum_{t=1}^T\sqrt{s_t}.
\label{app-eq-temporal-bound}
\end{equation}
\end{proposition}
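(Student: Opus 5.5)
The plan is to test the dual matrix constraint \eqref{app-eq-dual-full} against a vector built from the potential of Lemma~\ref{app-lem-harmonic-existence}. I then sum the resulting cut inequalities over $k$, optimize over $\lambda$, and finish with strong duality from Lemma~\ref{app-lem-duality}.

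\textbf{Setup.} Fix a feasible dual point and a round $k$, and let $z=z^{(k)}$ be the harmonic potential. Take a test vector $(y,z)\in\mathbb R^T\times\mathbb R^{m+1}$ with $y_t=0$ for $t<k$. Since $\mathbf M\succeq0$,
\[
\sum_{t\ge k}\lambda_t y_t^2+y^{\mathsf T}(E^{\mathsf T}\mathbf Q-H)z+z^{\mathsf T}\mathbf P_\mu z+(\bar Bz)^{\mathsf T}(\mathbf Qz)\ge0 .
\]
I would show that the two signed terms vanish, using only the causality patterns of $E$, $B$ and $\Gamma$.

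\textbf{The signed terms vanish.}
\begin{enumerate}
\item Consider $(E^{\mathsf T}\mathbf Qz)_t$ for $t\ge k$. Each summand $E_{rt}(\mathbf Qz)_r$ needs $E_{rt}\ne0$, hence $\tau_r\ge t\ge k$. Harmonicity then gives $(\mathbf Qz)_r=0$, so this term is zero on the support of $y$.
\item In $(\bar Bz)^{\mathsf T}(\mathbf Qz)$, only calls with $\tau_r<k$ survive. For such $r$, $(\bar Bz)_r=\sum_{j<r}B_{rj}z_j$ involves only outputs $j$ with $\tau_j<\tau_r<k$, where $z_j=0$. So this term is zero as well.
\end{enumerate}

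\textbf{The remaining terms.} Because $z\in[0,1]^{m+1}$, we have $z^{\mathsf T}\mathbf P_\mu z=\sum\mu_{ij}(z_i-z_j)^2\le\Upsilon$. Now set $w_t=-(Hz)_t=1-\sum_r\Gamma_{tr}z_r$. Here $\Gamma_{tr}\ne0$ forces $\tau_r<t$, and $z_r=0$ unless $\tau_r\ge k$. Together with $z_r\le1$ and $\Gamma\ge0$, this gives $w_t\ge h_{t,k}\ge0$. The inequality therefore reads
\[
\sum_{t\ge k}(\lambda_t y_t^2+y_tw_t)+\Upsilon\ge0 .
\]

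\textbf{Positivity of $\lambda_t$ and the cut bound.} First, $\lambda_t>0$: take $k=t$ and $y$ supported on $t$ alone. If $\lambda_t=0$, the expression is linear in $y_t$ with slope $w_t\ge h_{t,t}=1$, so it cannot stay nonnegative for all $y_t$. Then choose $y_t=-w_t/(2\lambda_t)$, which yields
\[
\Upsilon\ge\frac14\sum_{t\ge k}\frac{w_t^2}{\lambda_t}\ge\frac14\sum_{t\ge k}\frac{h_{t,k}^2}{\lambda_t},
\]
which is \eqref{app-eq-cut}.

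\textbf{Averaging and duality.} Averaging the cut bound over $k=1,\ldots,T$ gives $\Upsilon\ge\frac1{4T}\sum_t s_t/\lambda_t$. Hence the dual objective satisfies
\[
\Lambda+\Upsilon\ge\sum_t\Bigl(\lambda_t+\frac{s_t}{4T\lambda_t}\Bigr)\ge\sum_t\sqrt{s_t/T}
\]
by AM--GM. Strong duality (Lemma~\ref{app-lem-duality}) transfers this lower bound on every feasible dual value to $\mathsf V$, giving \eqref{app-eq-temporal-bound}.

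\textbf{Main obstacle.} The delicate step is the index bookkeeping that kills the two signed terms $E^{\mathsf T}\mathbf Qz$ and $(\bar Bz)^{\mathsf T}\mathbf Qz$. This needs the strict inequality $j<r$ for $B$, the convention $z_0=0$, and the fact that harmonicity holds exactly on calls with $\tau_r\ge k$. A further check is that the sign of $w_t$ is right even though $E$ and $B$ carry arbitrary signs; this is exactly where the boundedness and nonnegativity of the minimal harmonic solution are used.
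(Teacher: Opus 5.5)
Your proposal is correct and follows the paper's proof essentially step by step. You test $\mathbf M\succeq0$ against $(y,z^{(k)})$ with $y$ supported on $t\ge k$, kill $E^{\mathsf T}\mathbf Qz$ and $\langle\bar Bz,\mathbf Qz\rangle$ by causality and harmonicity, bound the $\mathbf P_\mu$ term by $\Upsilon$, get $\lambda_t>0$ from $h_{t,t}=1$, minimize over $y$, average over $k$, and invoke strong duality. The only cosmetic difference is that you finish with termwise AM--GM, $\lambda_t+s_t/(4T\lambda_t)\ge\sqrt{s_t/T}$, whereas the paper first applies Cauchy--Schwarz to $\sum_t s_t/\lambda_t$; both give the same bound.
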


\begin{proof}
Fix $k$ and abbreviate the potential in Lemma~\ref{app-lem-harmonic-existence} by $z$. Causality removes contributions from calls before the cutoff, while harmonicity removes those from later calls. Together they cancel both signed coefficient matrices \(E\) and \(B\) without discarding any oracle comparison. If $\tau_r<k$, all outputs available in query $r$ have potential zero, so $(\bar Bz)_r=0$. On the complementary indices, $(\mathbf Qz)_r=0$, giving
\begin{equation}
z^{\mathsf T}\operatorname{sym}(\bar B^{\mathsf T}\mathbf Q)z
=\langle\bar Bz,\mathbf Qz\rangle=0.
\label{app-eq-beta-cancellation}
\end{equation}
For a gradient index $t\ge k$, the coefficient $E_{rt}$ can be nonzero only if $\tau_r\ge t\ge k$, so $(E^{\mathsf T}\mathbf Qz)_t=0$ and
\begin{equation}
[(E^{\mathsf T}\mathbf Q-H)z]_t
=1-\sum_{r:k\le\tau_r<t}\Gamma_{tr}z_r
\ge h_{t,k}\qquad(t\ge k).
\label{app-eq-eta-cancellation}
\end{equation}
The last inequality uses nonnegativity of $\Gamma$. By~\eqref{app-eq-beta-cancellation},
\begin{equation}
z^{\mathsf T}\mathbf M_{pp}z
=\sum_{i<j}\mu_{ij}(z_i-z_j)^2\le\Upsilon,
\label{app-eq-potential-cost}
\end{equation}
since all point potentials are in $[0,1]$.

First take $k=t$. Then $h_{t,t}=1$, so~\eqref{app-eq-eta-cancellation} gives a nonzero cross term between gradient coordinate $t$ and point vector $z$. A positive-semidefinite quadratic form with zero diagonal coefficient at that gradient coordinate could not have such a cross term. Thus $\lambda_t>0$.

For general $k$, evaluate the quadratic form of $\mathbf M$ on vectors $(y,z)$, where $y\in\mathbb R^T$ is supported on indices $t\ge k$. Its minimum over these gradient coordinates is
\[
z^{\mathsf T}\mathbf M_{pp}z
-\frac14\sum_{t=k}^T
\frac{([(E^{\mathsf T}\mathbf Q-H)z]_t)^2}{\lambda_t}.
\]
It is nonnegative because $\mathbf M\succeq0$. Combining this fact with~\eqref{app-eq-eta-cancellation}--\eqref{app-eq-potential-cost} proves~\eqref{app-eq-cut}.

Average~\eqref{app-eq-cut} over $k=1,\ldots,T$. Cauchy--Schwarz gives
\[
\Upsilon\ge\frac1{4T}\sum_{t=1}^T\frac{s_t}{\lambda_t}
\ge\frac{(\sum_t\sqrt{s_t})^2}{4T\Lambda}.
\]
Consequently $\Lambda+\Upsilon\ge T^{-1/2}\sum_t\sqrt{s_t}$. This bound holds for every feasible dual point, so strong duality proves~\eqref{app-eq-temporal-bound}.
\end{proof}

\section{Fixed-oracle realization of the SDP bounds}
\label{app-fixed-oracle}

To realize a Gram solution with one oracle, we need distinct retained query rays to remain distinct after factorization. A positive-definite perturbation ensures this while preserving an arbitrarily large fraction of the objective.

\begin{lemma}[A full-rank feasible witness of zero objective]
\label{app-lem-pd-witness}
For every reduced coefficient array, the feasible set of~\eqref{app-eq-primal-full} contains a positive-definite matrix $\mathbf G_{\mathrm{pd}}$ with $\Phi(\mathbf G_{\mathrm{pd}})=0$.
\end{lemma}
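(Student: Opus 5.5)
We construct the witness directly, adapting the strict witness of Appendix~\ref{app-strict}. We drop strictness, which is not needed here, so the construction no longer uses \(\eta_{tt}\ne0\). The plan is to build vectors in \(\R^N\) whose Gram matrix is positive definite, then scale them so that the norm and diameter constraints hold.

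\emph{Gradients and comparator.} Choose orthonormal vectors \(\widetilde g_1,\ldots,\widetilde g_T\), \(\xi_1,\ldots,\xi_m\) and \(\widetilde u\), which uses exactly \(N=T+m+1\) dimensions. Every point will lie in \(\operatorname{span}\{\xi_r\}\cup\{\widetilde u\}\), so each gradient is orthogonal to every point. Then \(\Phi=\sum_t\ip{\widetilde g_t}{x_t-\widetilde u}=0\) for any choice of outputs.

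\emph{Outputs.} Process the retained calls in order \(r=1,\ldots,m\), with \(\widetilde p_0=0\). The query \(\widetilde q_r=\sum_t E_{rt}\widetilde g_t+\sum_{j<r}B_{rj}\widetilde v_j\) involves only gradients and earlier outputs. Choose a parent \(\pi_r\in\{0,\ldots,r-1\}\) minimizing \(\ip{\widetilde q_r}{\widetilde p_j}\) over those indices, and set
\[
\widetilde v_r=\widetilde p_{\pi_r}+\xi_r .
\]
This is the unit-contraction version of \eqref{eq-strict-parent} without the gradient term.

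We then check all oracle comparisons by induction, showing that after step \(r\) every call \(s\le r\) satisfies its comparisons against all points constructed so far.
\begin{itemize}
\item \emph{Current call \(r\).} Since \(\xi_r\) is orthogonal to \(\widetilde q_r\), the new score is \(\ip{\widetilde q_r}{\widetilde v_r}=\min_{j<r}\ip{\widetilde q_r}{\widetilde p_j}\le0\). So \(\widetilde v_r\) is at least as good as the origin and every earlier output.
\item \emph{Earlier calls \(s<r\).} The vector \(\xi_r\) is orthogonal to \(\widetilde q_s\), so \(\ip{\widetilde q_s}{\widetilde v_r}=\ip{\widetilde q_s}{\widetilde p_{\pi_r}}\). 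This is the score of an already constructed point, and by induction it is at least \(\ip{\widetilde q_s}{\widetilde v_s}\).
\item \emph{Comparator.} The comparator has score zero at every query, which is at least the reply's nonpositive score.
\end{itemize}
A formally zero query is harmless, since all its comparisons hold with equality. Hence all constraints \(\vartheta_r^{\mathsf T}\mathbf G\,\iota(e_r-e_j)\le0\) hold.

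\emph{Positive definiteness and scaling.} The Gram matrix of \((\widetilde g,\widetilde v,\widetilde u)\) is positive definite. Indeed, the change of basis from \((\widetilde g,\xi,\widetilde u)\) is unitriangular, because each \(\widetilde v_r\) equals \(\xi_r\) plus earlier outputs. By unrolling parent paths, each output is a sum of distinct \(\xi_j\) with coefficient one. The difference of two outputs therefore has coefficients in \(\{-1,0,1\}\) on at most \(m\) of the \(\xi\)'s, giving squared distance at most \(m\). The squared distance from an output or the origin to \(\widetilde u\) is at most \(m+1\).

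Multiply every vector by \(\rho=1/\sqrt{m+1}\). Gradient norms become at most one, the point diameter becomes at most one, and each query--point score is multiplied by \(\rho^2>0\), so every comparison sign is preserved. Positive definiteness and \(\Phi=0\) are also preserved, which gives \(\mathbf G_{\rm pd}\).

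The only delicate point is the comparisons of earlier calls against future outputs. The parent-copy construction handles it, since each new output scores, at every earlier query, exactly like an existing point. For that reason we expect no real obstacle beyond this bookkeeping.
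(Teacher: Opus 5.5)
Your proof is correct and follows essentially the same route as the paper. The paper also uses mutually orthonormal gradient, edge and comparator directions, sets each output $v_r=p_{\pi_r}+\xi_r$ with a minimizing parent, checks the comparisons by the same parent-copy argument, and gets positive definiteness from the fresh coordinates. The differences are cosmetic: you fix the orthonormal frame up front instead of choosing fresh vectors round by round, and you scale by $1/\sqrt{m+1}$ rather than $1/(2\sqrt{m+1})$, which still satisfies every constraint of the program.
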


\begin{proof}
Use superscript $(0)$ for the auxiliary vectors in this construction, and process rounds chronologically. At round $t$, choose $g_t^{(0)}$ as a fresh unit vector orthogonal to all previously constructed vectors. At a retained call $r$ in that round, compute $q_r^{(0)}$ from~\eqref{app-eq-reduced-model} and choose a parent index
\[
\pi_r\in\arg\min_{j\in\{0,1,\ldots,r-1\}}
\langle q_r^{(0)},p_j^{(0)}\rangle,
\qquad p_0^{(0)}=0,
\]
where $p_j^{(0)}=v_j^{(0)}$ for $j>0$. Choose a fresh unit vector $\xi_r$ orthogonal to every vector constructed so far, and set
\[
v_r^{(0)}=p_{\pi_r}^{(0)}+\xi_r.
\]
Every earlier query and the current query lie in the span to which $\xi_r$ is orthogonal, so the new output and its parent have identical values under these queries. By induction, the parent satisfies every earlier oracle supporting inequality, which the new vertex therefore preserves. Its current value is no greater than the values at zero and at any previous output. This proves all oracle comparisons among the origin and output vectors, including comparisons with outputs constructed later.

After the last round, choose $u^{(0)}$ as one more fresh unit vector orthogonal to every vector already constructed. Its value under each query is zero, whereas each prescribed output has query value at most zero, so all oracle comparisons with $u^{(0)}$ hold as well.

Each new vector has a nonzero coordinate orthogonal to all preceding vectors, giving a linearly independent list in dimension $N$. Each output is a sum of the unit edge vectors on a path in the parent tree, so its squared norm is at most $m$, squared distances between outputs are at most $m$, and squared distances from an output to $u^{(0)}$ are at most $m+1$. Multiply every vector by
\[
\rho=\frac1{2\sqrt{m+1}}.
\]
The gradient norms and point diameter are then at most $1/2$. Because all queries scale by the same factor, every oracle inequality is preserved, and the resulting Gram matrix $\mathbf G_{\mathrm{pd}}$ is feasible and positive definite.

Finally, each $g_t^{(0)}$ is orthogonal to all outputs available in $x_t$, and $u^{(0)}$ is orthogonal to every gradient. These properties survive uniform scaling and give $\Phi(\mathbf G_{\mathrm{pd}})=0$.
\end{proof}

\begin{lemma}[Fixed-oracle realization]
\label{app-lem-fixed-oracle}
For every $\omega\in(0,1)$, the reduced scheme has a domain of diameter at most one in dimension at most $N$, a fixed deterministic vertex-valued exact oracle invariant under positive rescaling, and a fixed gradient sequence with regret at least $(1-\omega)\mathsf V$. The original scheme has the same transcript after reinstating aliased calls.
\end{lemma}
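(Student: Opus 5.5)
The plan mirrors the mixing argument of Corollary~\ref{cor-strict}, with Lemma~\ref{app-lem-pd-witness} playing the role of the strict witness. By Lemma~\ref{app-lem-duality} the primal optimum \(\mathsf V\) is attained at some feasible \(\mathbf G_\star\). Set
\[
\mathbf G_\omega=(1-\omega)\mathbf G_\star+\omega\mathbf G_{\rm pd}.
\]
The feasible set is convex, so \(\mathbf G_\omega\) is feasible. Since \(\Phi(\mathbf G_{\rm pd})=0\), we have \(\Phi(\mathbf G_\omega)=(1-\omega)\mathsf V\). Because \(\mathbf G_\star\succeq0\) and \(\mathbf G_{\rm pd}\succ0\), the mixture is positive definite. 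Factor \(\mathbf G_\omega=Z^{\mathsf T}Z\) with \(Z\) square and invertible, so the formal vectors \(g_1,\ldots,g_T,v_1,\ldots,v_m,u\) become linearly independent in \(\R^N\). As in Lemma~\ref{app-lem-duality}, the domain is \(\mathcal C=\conv\{p_j:j\in J\}\), which has diameter at most one. The gradients have norm at most one, and the objective \(\sum_t\ip{g_t}{x_t-u}\) is a lower bound on regret.

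The key step, and the main obstacle, is to define a single fixed oracle that is exact and invariant under positive rescaling, and that returns the prescribed \(v_r\) at each realized query \(q_r\). Since the vectors are linearly independent, the realized query \(q_r=Z\vartheta_r\) is injective in its formal coefficient vector \(\vartheta_r\). Lemma~\ref{app-lem-canonicalization} guarantees that the retained nonzero formal queries lie on pairwise distinct positive rays and that at most one is zero. Hence the realized nonzero queries also lie on distinct rays.

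Define the oracle as follows.
\begin{itemize}
\item On the ray \(\{cq_r:c>0\}\) of a retained nonzero query, return \(v_r\). This is a minimizer, because the constraints of~\eqref{app-eq-primal-full} force \(\ip{q_r}{v_r}\le\ip{q_r}{p_j}\) for every \(j\in J\), and minimality over vertices gives minimality over \(\mathcal C\). The rays are distinct, so these assignments never conflict.
\item At \(q=0\), return the output of the retained zero query, if there is one. Every point minimizes the zero functional, so this is exact.
\item At every other \(q\), return the least-index minimizing vertex of \(q/\norm{q}\) in the ordering \(0<1<\cdots<m<\star\).
\end{itemize}
This map is exact, vertex-valued, and depends only on the ray of its argument, so it is invariant under positive rescaling.

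The gradients are the fixed vectors \(g_t\) from the factorization. By induction over calls, each retained query the learner issues is \(Z\vartheta_r\) and receives the reply \(v_r\), so the reduced scheme reproduces the intended transcript, with regret at least \(\Phi(\mathbf G_\omega)=(1-\omega)\mathsf V\). Lemma~\ref{app-lem-canonicalization}(1) then transfers the same played points to the original scheme. Each removed call is either a positive multiple of an earlier retained query or a formally zero query, and the oracle answers it with its alias's reply. The dimension is \(N\).

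If \(\mathsf V=0\), the statement is trivial, since any feasible instance gives regret at least zero.
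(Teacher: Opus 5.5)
Your proposal is correct and follows essentially the same route as the paper. You mix an optimal $\mathbf G_*$ with the positive-definite zero-objective witness of Lemma~\ref{app-lem-pd-witness}, factor it, and use injectivity of $Z$ with canonicalization to keep retained query rays distinct. You then prescribe replies on those rays and at zero, fall back to a least-index rule elsewhere, and replay by induction. One small point: your closing remark about $\mathsf V=0$ is unnecessary, and as phrased it is inaccurate, since regret and $\Phi$ can be negative in general. The mixing argument already covers that case.
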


\begin{proof}
Let $\mathbf G_*$ attain~\eqref{app-eq-primal-full} and define
\[
\widehat{\mathbf G}=(1-\omega)\mathbf G_*+\omega\mathbf G_{\mathrm{pd}}.
\]
It is feasible and positive definite, with objective $(1-\omega)\mathsf V$. Factor it as $Z^{\mathsf T}Z$ and take the columns as the gradient, output, and comparison vectors. Positive definiteness of $\widehat{\mathbf G}$ makes the map $a\mapsto Za$ injective, so it preserves and reflects linear relations. By canonicalization, at most one retained query is zero and no two retained nonzero actual queries lie on the same positive ray.

Set $\mathcal C=\operatorname{conv}\{0,v_1,\ldots,v_m,u\}$. The point vectors $v_1,\ldots,v_m,u$ are linearly independent, so these points and zero are vertices of a simplex. Prescribe $\mathcal O(cq_r)=v_r$ for every retained nonzero query and every $c>0$, and prescribe the response at zero if a retained zero query exists. These prescriptions are consistent, and the primal oracle inequalities make each response a minimizing vertex. At all other directions, return the first minimizing vertex in a fixed ordering of the domain vertices. The resulting oracle is deterministic, vertex-valued, exact, and invariant under positive rescaling.

With all gradients chosen by the factorization before play, induction over rounds and calls shows that the original scheme produces the factored transcript, as retained calls receive their prescribed outputs and removed calls return their retained aliases. The comparison point $u$ is feasible, so regret is at least $\Phi(\widehat{\mathbf G})=(1-\omega)\mathsf V$.
\end{proof}

\section{Additional experimental details}\label{app-experiments}
\paragraph{Source and model match.} The six horizons in Table~\ref{tab:certified-examples} match entries in the saved tuned-schedule curve supplied with \citet[Figure~1, top left]{WeibelEtAl2026}. The schedule, gradient norm and diameter bounds, initial point up to translation, and regret objective agree with ours. We omit only the final oracle call, which has no effect on regret. Figure~\ref{fig:computational-checks} compares these instances with that numerical curve and the analytical bounds. The reported agreement is the absolute difference divided by the saved numerical value. A certified lower bound can slightly exceed this floating-point estimate, so the comparison does not certify an optimality gap or an asymptotic leading constant.

\paragraph{Exact certification.} The full PEP enforces every gradient norm bound, every pairwise distance bound among the origin, oracle outputs and comparator, and every oracle comparison against these points, including future outputs. We round a Euclidean factor, add a small explicitly factored feasible Gram matrix to make nontrivial oracle comparisons strict, and apply a common scaling to restore the norm and diameter bounds. The resulting rational Gram matrix is positive semidefinite by its explicit factors. An independent verifier reconstructs it with integer arithmetic and checks all norm and distance inequalities exactly. It encloses the exact algebraic parameters $\theta^4=27/(16T^3)$ and $\sigma^2=3/T$ in rational intervals and uses outward Horner interval evaluation for the geometric weights. Every nontrivial oracle comparison has a strictly positive lower bound, including comparisons with the origin, past and future outputs, and the comparator. Thus every issued query has a unique minimizer on the convex hull of these points. The same interval calculation bounds regret from below, and the verifier checks the claimed decimal bound without a numerical tolerance. The table rounds certified lower bounds downward.

\paragraph{Numerical searches.} The six searches use Clarabel and all report an optimal status. Strict feasibility repair changes each numerical objective by less than $0.01\%$.

\begin{figure}[H]
\centering
\includegraphics[width=5.5in]{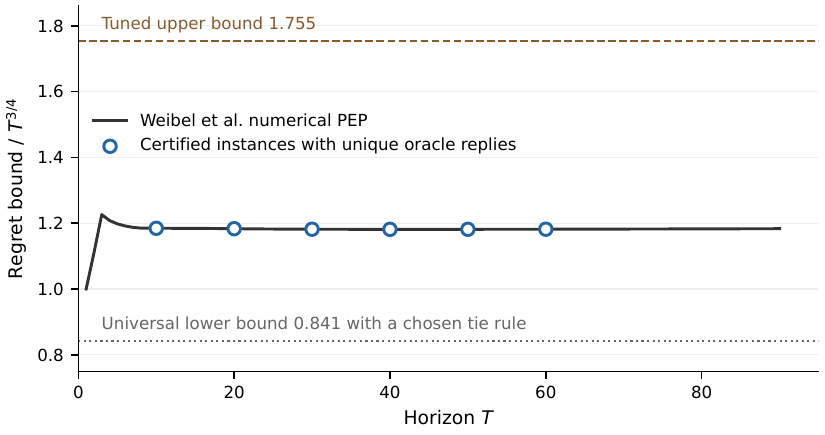}
\caption{Comparison for the tuned schedule with $L=D=1$. Certified instances with unique oracle replies are plotted against the numerical PEP curve of \citet[Figure~1, top left]{WeibelEtAl2026}. The reference lines give their Theorem~3.1 upper bound and our Theorem~\ref{thm-main} lower bound for $b=1$ with a chosen tie rule.}
\label{fig:computational-checks}
\end{figure}

\end{document}